\documentclass{article}

\PassOptionsToPackage{numbers, compress}{natbib}
\usepackage[preprint]{neurips_2026}

\usepackage[utf8]{inputenc}
\usepackage[T1]{fontenc}
\usepackage[hidelinks]{hyperref}
\usepackage{url}
\usepackage{booktabs}
\usepackage{amsmath,amssymb,amsthm,mathtools,bm}
\usepackage{array}
\usepackage{graphicx}
\usepackage{nicefrac}
\usepackage{microtype}
\usepackage{xcolor}
\definecolor{codepink}{RGB}{185,45,100}
\usepackage{enumitem}

\newtheorem{theorem}{Theorem}
\newtheorem{proposition}[theorem]{Proposition}
\newtheorem{corollary}[theorem]{Corollary}

\newtheorem{remark}[theorem]{Remark}

\newcommand{\R}{\mathbb{R}}
\newcommand{\E}{\mathbb{E}}
\newcommand{\cL}{\mathcal{L}}
\newcommand{\cD}{\mathcal{D}}
\newcommand{\cU}{\mathcal{U}}
\newcommand{\cT}{\mathcal{T}}
\newcommand{\Proj}{P_{\cU}}
\newcommand{\ip}[2]{\left\langle #1,#2\right\rangle}
\newcommand{\norm}[1]{\left\lVert #1\right\rVert}

\title{When Do Surrogate Updates Improve Decisions? A Local Theory of Trajectory-Wise Transfer}
\author{%
Yuyang Shen\\
The Chinese University of Hong Kong, Shenzhen\\
\texttt{yuyangshen@link.cuhk.edu.cn}\\
\\
{\small Code:
\href{https://github.com/Ethan-Shen-Individual-Lab/surrogate-to-decision-transfer}
{\textcolor{codepink}{\nolinkurl{surrogate-to-decision-transfer}}}}}

\begin{document}
\maketitle

\renewcommand{\footnoterule}{}
\begingroup
\renewcommand{\thefootnote}{}
\footnotetext{\phantom{Corresponding author.}}
\endgroup

\begin{abstract}
A broad range of models face the mismatch where they are updated through trajectory losses but are evaluated by downstream task reward. Here, a trajectory is a training instance that induces a surrogate loss whose reduction might not track the model's decision utility update. Theoretically, we ask when one step of trajectory training reduces both population surrogate loss and decision risk, and how transfer accumulates along repeated updates. To formalize this, we first fix a checkpoint and a restricted update space, and define the reductions in population surrogate risk and decision risk induced by a trajectory as its \emph{learnability} and \emph{decision utility}, respectively. On this basis, our theory yields four main results. First, a one-step transfer bound separates their discrepancy into first-order gradient misalignment after nonnegative calibration and second-order curvature; and a pathwise extension accumulates the same terms over repeated updates. Second, when the accessible surrogate gradient is nonzero, universal first-order transfer over every accessible direction holds exactly when the accessible surrogate and decision gradients are positively collinear. Third, the calibration gap bounds the decision regret of learnability-based trajectory selection, while a candidate-difference refinement tightens this guarantee by retaining only directions that affect pairwise rankings. Finally, we establish an approximation--calibration trade-off across nested update spaces. Controlled gridworld experiments recover the predicted quadratic Taylor scaling, with cumulative discrepancies remaining below the plug-in envelope. Across RL and LLM post-training, controlled misalignment degrades candidate rankings; in the LLM setting, corruption shifts the gradient cosine from $0.074$ to $-0.605$ and yields decision-harmful cross-fitted witnesses in all five seeds. Nested-space experiments exhibit the predicted approximation--calibration trade-off.
\end{abstract}

\section{Introduction}
\label{sec:introduction}
Across diverse model-training paradigms, including imitation learning \citep{ross2011reduction}, policy-gradient and actor--critic reinforcement learning \citep{sutton1999policy,konda1999actor}, language-model post-training \citep{ouyang2022training,rafailov2023direct}, and test-time adaptation \citep{wang2021tent}, models are updated using trajectories or small candidate pools. Here, a trajectory is a training instance, which may be a rollout, demonstration, rationale--answer example, or preference-labeled response. Each trajectory induces a surrogate objective, such as a demonstration-based supervised loss, a reward- or advantage-weighted loss, or a preference-based loss \citep{ross2011reduction,schulman2017ppo,magister2023teaching,rafailov2023direct}. Such differentiable signals might not track action return, answer correctness, or other downstream decision task rewards \citep{bartlett2006convexity,liu2021risk,elmachtoub2022smart}. This mismatch matters when choosing trajectories \citep{mindermann2022prioritized,xia2024less} or trainable parameters \citep{hu2022lora,wang2021tent} during model training, since such choices require knowing whether the training signal at the current checkpoint is informative about the downstream effect.

We study this problem from a local level perspective. At a fixed iterate, the trainable head, adapter, or model defines an accessible subspace; trajectory $\tau$ induces gradient $g_\tau$ and displacement $-\eta g_\tau$. First-order transfer compares surrogate risk and decision risk under the same change through the alignment of their accessible population gradients up to a nonnegative scale. It is iterate- and space-specific, and expanding the trainable set can improve attainable decisions while exposing misaligned directions.

We derive one-step and pathwise transfer bounds, characterize universal first-order transfer by positive collinearity, and bound learnability-based selection regret using calibration, curvature, and bounded candidate gradients. For finite pools, we tighten this bound to the span of candidate-gradient differences, discarding residual directions invisible to pairwise rankings. We also expose an approximation--calibration trade-off across nested update spaces. These diagnostics inform supervision, selection, and trainable-parameter design without changing the objective.

Unlike global surrogate--regret analyses of excess risk \citep{bartlett2006convexity,tewari2007consistency,osokin2017structured} or decision-focused methods that modify the objective \citep{donti2017task,elmachtoub2022smart,mandi2024decision}, we ask whether an existing update direction is locally decision-calibrated, how its error accumulates pathwise, and what this implies for trajectory selection.

\paragraph{Contributions.} Our main contributions are as follows:
\begin{itemize}[leftmargin=*]
    \item \textbf{Trajectory-wise local transfer.} Taking a trajectory-wise perspective, we evaluate each trajectory-induced parameter update under two population objectives: its reduction in surrogate risk, termed \emph{learnability}, and its reduction in decision risk, termed \emph{decision utility}. This formulation makes their local relationship explicit within the update space accessible to the chosen trainable parameters.
    \item \textbf{Transfer guarantees and exact boundary.} We derive one-step and cumulative bounds that separate first-order calibration error from second-order curvature effects, and, for a nonzero accessible surrogate gradient, prove that universal first-order transfer holds exactly under positive collinearity.
    \item \textbf{Trajectory selection.}
    For a finite pool of candidate trajectories with bounded gradients, we bound the one-step decision-to-utility regret of learnability-based selection in terms of the local calibration gap and curvature. We further sharpen it by measuring misalignment only along pairwise candidate-gradient differences, thereby removing directions that cannot affect candidate rankings. 
    \item \textbf{Update-space design.} For nested update spaces, we establish approximation--calibration tradeoff: enlarging the trainable subspace decreases approximation error while increasing calibration gap.
    \item \textbf{Controlled validation.} Experiments with a PPO-initialized gridworld policy and LLM post-training test the transfer relation, its failure under misalignment, and the predicted selection and update-space effects, providing diagnostics for policy learning and language-model adaptation.
\end{itemize}

\section{Related Work}
\label{sec:related-work}

\paragraph{Surrogate calibration and regret transfer.}
Classical learning theory asks when minimizing a tractable surrogate is consistent for a target loss, and relates their excess risks through calibration functions \citep{bartlett2006convexity,tewari2007consistency}. Extensions cover structured prediction, where targets and surrogates may be multivariate \citep{osokin2017structured}. These results compare population risks relative to their optima under global, distribution-level conditions. We retain their direction and finite step size, studying one-step and pathwise transfer through a local geometric gap distinct from probabilistic confidence calibration.

\paragraph{Decision-focused learning and downstream-aware selection.}
Decision-focused and predict-then-optimize methods train predictors through downstream optimization objectives \citep{donti2017task,elmachtoub2022smart,mandi2024decision}. Related work develops calibrated decision-aware surrogates \citep{liu2021risk} and directional-gradient losses \citep{gupta2024directional}. For LLM post-training, NICE uses policy-gradient signals to select examples by downstream utility \citep{wang2025nice}. These approaches modify the objective or selection rule to incorporate downstream information. 

\paragraph{Trajectory-based learning and training-time choices.}
Imitation learning and policy optimization obtain tractable updates from demonstrations or reward-weighted trajectories \citep{ross2011reduction,schulman2017ppo}. Curriculum and data-selection methods rank examples by difficulty, training progress, or learnability \citep{bengio2009curriculum,mindermann2022prioritized}. Gradient-based variants match training or validation gradients \citep{killamsetty2021gradmatch}, mitigate conflicts among objectives \citep{yu2020gradient}, or select LLM tuning data through low-rank gradient similarity \citep{xia2024less}. A separate choice is which parameters may move: LoRA imposes a low-rank adaptation parameterization \citep{hu2022lora}, while TENT updates restricted normalization parameters at test time \citep{wang2021tent}. Comparatively, our framework places trajectory and trainable-parameter choices in a common accessible geometry. It bounds learnability-based selection regret, sharpens finite-pool control to candidate-gradient differences, and separates the approximation benefit of a larger update space from its potential calibration cost.

\section{Problem Setting}
\label{sec:problem-setting}

\paragraph{Policy and decision risk.}
Let $\pi_\theta$ be a stochastic policy with parameter vector $\theta\in\R^p$. The input $X$ may be features, a prompt, state, or decision history, while $A\sim\pi_\theta(\cdot\mid X)$ may be a prediction, response, or policy-controlled actions in a rollout. For a bounded task reward $r(X,A)$, we define
\begin{equation}
    J(\theta)=\E[r(X,A)],
    \qquad
    \cD(\theta)=-J(\theta),
    \label{eq:decision-risk}
\end{equation}
where the expectation is over the evaluation distribution and policy. Thus, minimizing $\cD$ equals to maximize expected reward. The optimal decision risk within the parameterized policy class is
\[
    \cD^*=\inf_{\vartheta\in\R^p}\cD(\vartheta).
\]

\paragraph{Restricted update space.}
Let $\theta_0$ be a reference checkpoint, and $\cU\subseteq\R^p$ be a linear update subspace with orthogonal projector $\Proj$. Therefore, the accessible parameter class is
\begin{equation}
    \Theta_{\cU}=\theta_0+\cU.
\end{equation}
This representation covers frozen-coordinate and frozen-layer adaptation exactly and locally models parameter-efficient methods through their tangent spaces \citep{wang2021tent,hu2022lora}. Its optimal decision risk is
\begin{equation}
    \cD_{\cU}^*
    =
    \inf_{\vartheta\in\Theta_{\cU}}\cD(\vartheta).
\end{equation}
For any $\theta\in\Theta_{\cU}$, the total decision excess risk relative to the full parameter class admits decomposition
\begin{equation}
\underbrace{\cD(\theta)-\cD^*}_{\text{total decision excess risk}}
=
\underbrace{\cD(\theta)-\cD_{\cU}^*}_{\mathcal E_{\cD,\cU}(\theta)}
+
\underbrace{\cD_{\cU}^*-\cD^*}_{\mathcal A_{\cD}(\cU)}.
\label{eq:risk-decomposition}
\end{equation}
$\mathcal E_{\cD,\cU}(\theta)$ is restricted excess risk and $\mathcal A_{\cD}(\cU)$ is approximation error imposed by parameter budget.

\paragraph{Trajectory-induced losses and surrogate risk.}
Let $\tau\sim\nu$ be a trajectory-level training or adaptation example, such as a rollout or teacher-generated reasoning trace. It induces a differentiable supervised, preference-based, or reward-weighted loss $\ell_\tau(\theta)$, with population surrogate risk
\begin{equation}
    \cL(\theta)=\E_{\tau\sim\nu}[\ell_\tau(\theta)].
\end{equation}
Here ``surrogate'' distinguishes objective roles rather than supervision sources: $\ell_\tau$ supplies the update signal, whereas $\cD$ evaluates downstream decisions. The former may also use rewards, advantages, or preferences while differing from the latter through weighting, regularization, or sampling.

At an iterate $\theta\in\Theta_{\cU}$, feasible perturbations lie in space $\cU$. The decomposition
\[
\nabla\ell_\tau(\theta)
=
\Proj\nabla\ell_\tau(\theta)
+
(I-\Proj)\nabla\ell_\tau(\theta)
\]
separates the accessible component from one orthogonal to every feasible direction, where $I$ is the identity on $\R^p$ and $P_{\cU}$ is the orthogonal projector onto $\cU$. Therefore, define the accessible gradient for a single trajectory update as
\begin{equation}
    g_\tau=\Proj\nabla\ell_\tau(\theta).
\end{equation}
Since $g_\tau\in\cU$, a projected step of size $\eta>0$ (refers to learning rate) also remains feasible:
\begin{equation}
    \theta_\tau^+=\theta-\eta g_\tau\in\Theta_{\cU}.
    \label{eq:one-step-update}
\end{equation}

\paragraph{Learnability and decision utility.} Trajectory-induced reductions in population surrogate risk and population decision risk are called \emph{learnability} and \emph{decision utility}, respectively, defined as
\begin{align}
    \Lambda_\tau(\theta)
    =
    \cL(\theta)-\cL(\theta_\tau^+),
    \label{eq:learnability}
    \\
    U_\tau(\theta)
    =
    \cD(\theta)-\cD(\theta_\tau^+)
    =
    J(\theta_\tau^+)-J(\theta).
    \label{eq:decision-utility}
\end{align}
Each is positive when its objective improves. We ask when larger learnability implies larger decision utility under the restricted geometry $\cU$. Next, define the accessible population gradients
\begin{equation}
    g_{\cL}=\Proj\nabla\cL(\theta),
    \qquad
    g_{\cD}=\Proj\nabla\cD(\theta),
\end{equation}
and the local calibration gap
\begin{equation}
    \kappa_{\cU}(\theta)
    =
    \inf_{c\geq 0}
    \norm{g_{\cD}-c g_{\cL}}.
    \label{eq:calibration-gap}
\end{equation}
This gap measures how closely the surrogate and decision gradients align in the accessible update space, up to a nonnegative rescaling.

\section{Main Results}
\label{sec:main-results}

\subsection{One-step and cumulative surrogate-to-decision transfer}

\paragraph{First-order local expansions.}
For the common feasible perturbation
\begin{equation}
    \Delta_\tau
    =
    \theta_\tau^+-\theta
    =
    -\eta g_\tau,
    \label{eq:trajectory-perturbation}
\end{equation}
A first-order Taylor expansion under local smoothness gives
\[
\cL(\theta+\Delta_\tau)
=
\cL(\theta)
+
\ip{\nabla\cL(\theta)}{\Delta_\tau}
+
O\!\left(\norm{\Delta_\tau}^2\right).
\]
Substituting this expansion into learnability, and then
$\Delta_\tau=-\eta g_\tau$, yields
\begin{align}
\Lambda_\tau(\theta)
&=
\cL(\theta)-\cL(\theta+\Delta_\tau)
\nonumber\\
&=
-\ip{\nabla\cL(\theta)}{\Delta_\tau}
+O\!\left(\norm{\Delta_\tau}^2\right)
\nonumber\\
&=
\eta\ip{\nabla\cL(\theta)}{g_\tau}
+O\!\left(\eta^2\norm{g_\tau}^2\right)
\nonumber\\
&=
\eta\ip{g_{\cL}}{g_\tau}
+O\!\left(\eta^2\norm{g_\tau}^2\right).
\label{eq:learnability-first-order-expansion}
\end{align}
The last equality uses $g_\tau\in\cU$, so
$\ip{\nabla\cL(\theta)}{g_\tau}
=\ip{\Proj\nabla\cL(\theta)}{g_\tau}
=\ip{g_{\cL}}{g_\tau}$.
The same argument for $\cD$ gives
\begin{equation}
    U_\tau(\theta)
    =
    \eta\ip{g_{\cD}}{g_\tau}
    +O\!\left(\eta^2\norm{g_\tau}^2\right).
    \label{eq:utility-first-order-expansion}
\end{equation}
Therefore, for any fixed $c\geq0$,
\begin{equation}
    U_\tau(\theta)-c\Lambda_\tau(\theta)
    =
    \eta\ip{g_{\cD}-c g_{\cL}}{g_\tau}
    +
    O\!\left(\eta^2\norm{g_\tau}^2\right).
    \label{eq:first-order-transfer-expansion}
\end{equation}
The first-order discrepancy is governed by the interaction between $g_\tau$ and the calibrated residual $g_{\cD}-cg_{\cL}$. Theorem~\ref{thm:transfer} makes this comparison finite-step; Section~\ref{sec:decision-calibration} identifies when it preserves every accessible improving direction. Appendix~\ref{app:proof-transfer} gives the full remainder derivation.

\begin{theorem}[One-step transfer bound]
\label{thm:transfer}
Assume that $\cL$ and $\cD$ have $\beta_{\cL}$- and $\beta_{\cD}$-Lipschitz gradients, respectively, on a neighborhood containing every segment $[\theta,\theta_\tau^+]$. Then, for every trajectory $\tau$,
\begin{align}
    \left|
    \Lambda_\tau(\theta)
    -\eta\ip{g_{\cL}}{g_\tau}
    \right|
    &\leq
    \frac{\beta_{\cL}\eta^2}{2}\norm{g_\tau}^2,
    \label{eq:surrogate-taylor}\\
    \left|
    U_\tau(\theta)
    -\eta\ip{g_{\cD}}{g_\tau}
    \right|
    &\leq
    \frac{\beta_{\cD}\eta^2}{2}\norm{g_\tau}^2.
    \label{eq:decision-taylor}
\end{align}
Consequently, for every $c\geq0$,
\begin{equation}
\left|
U_\tau(\theta)-c\Lambda_\tau(\theta)
\right|
\leq
\eta\norm{g_{\cD}-c g_{\cL}}\norm{g_\tau}
+
\frac{\eta^2}{2}
\bigl(\beta_{\cD}+c\beta_{\cL}\bigr)
\norm{g_\tau}^2.
\label{eq:transfer-main}
\end{equation}
\end{theorem}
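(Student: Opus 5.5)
The plan is to apply the standard descent-lemma (quadratic upper/lower bound) for functions with Lipschitz gradients along the segment $[\theta,\theta_\tau^+]$, then combine the two resulting bounds by the triangle and Cauchy--Schwarz inequalities.

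\textbf{Step 1 (Taylor remainder for $\cL$).} For $\Delta_\tau=-\eta g_\tau$, I will write the integral form of the mean-value theorem,
\[
\cL(\theta+\Delta_\tau)-\cL(\theta)-\ip{\nabla\cL(\theta)}{\Delta_\tau}
=\int_0^1\ip{\nabla\cL(\theta+t\Delta_\tau)-\nabla\cL(\theta)}{\Delta_\tau}\,dt .
\]
This is valid because the segment lies in the neighborhood where $\nabla\cL$ is $\beta_{\cL}$-Lipschitz, so $\cL$ is $C^1$ there. By Cauchy--Schwarz and the Lipschitz bound, the integrand is at most $\beta_{\cL}t\norm{\Delta_\tau}^2$ in absolute value. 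Integrating gives the bound $\tfrac{\beta_{\cL}}{2}\eta^2\norm{g_\tau}^2$. Then I rewrite $-\ip{\nabla\cL(\theta)}{\Delta_\tau}=\eta\ip{\nabla\cL(\theta)}{g_\tau}=\eta\ip{g_{\cL}}{g_\tau}$, using $g_\tau\in\cU$ and the self-adjointness and idempotence of $\Proj$ exactly as in the derivation of \eqref{eq:learnability-first-order-expansion}. Recalling $\Lambda_\tau=\cL(\theta)-\cL(\theta+\Delta_\tau)$ yields \eqref{eq:surrogate-taylor}.

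\textbf{Step 2 (same for $\cD$).} The identical argument with $\beta_{\cD}$ gives \eqref{eq:decision-taylor}, since $U_\tau=\cD(\theta)-\cD(\theta+\Delta_\tau)$.

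\textbf{Step 3 (combination).} For fixed $c\ge0$, I decompose
\[
U_\tau-c\Lambda_\tau
=\bigl(U_\tau-\eta\ip{g_{\cD}}{g_\tau}\bigr)-c\bigl(\Lambda_\tau-\eta\ip{g_{\cL}}{g_\tau}\bigr)+\eta\ip{g_{\cD}-cg_{\cL}}{g_\tau}.
\]
Applying the triangle inequality, Steps 1--2 (with $c\ge0$ so that $|c|=c$), and Cauchy--Schwarz on the last term gives \eqref{eq:transfer-main}.

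There is no real obstacle here. The only points needing care are, first, checking that the Lipschitz hypothesis is used only on the segment, which the assumption covers. Second, one must note that nonnegativity of $c$ is needed so that the curvature term for $\cL$ contributes as $c\beta_{\cL}$ rather than $|c|\beta_{\cL}$.
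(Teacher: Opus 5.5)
Your proposal is correct and follows essentially the same route as the paper. The paper likewise bounds the integral-form Taylor remainder by $\tfrac{\beta}{2}\eta^2\norm{g_\tau}^2$ via Cauchy--Schwarz and the Lipschitz gradient, uses $\ip{\nabla F(\theta)}{g_\tau}=\ip{\Proj\nabla F(\theta)}{g_\tau}$ for $g_\tau\in\cU$, and then combines the two error terms with the triangle inequality, Cauchy--Schwarz, and $c\geq 0$.
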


\emph{Proof.} See Appendix~\ref{app:proof-transfer}.

\paragraph{Optimal calibration scale.} Minimizing the first-order term in \eqref{eq:transfer-main} over $c\geq0$ gives
\begin{equation}
    c_{\cU}^*(\theta)
    \in
    \arg\min_{c\geq0}
    \norm{g_{\cD}-c g_{\cL}}.
    \label{eq:calibration-scale}
\end{equation}
For $g_{\cL}\neq0$, this is the projection coefficient of $g_{\cD}$ onto the nonnegative ray generated by $g_{\cL}$:
\begin{equation}
    c_{\cU}^*(\theta)
    =
    \frac{\bigl[\ip{g_{\cD}}{g_{\cL}}\bigr]_+}
    {\norm{g_{\cL}}^2},
    \qquad
    [a]_+=\max\{a,0\},
    \quad a\in\R.
    \label{eq:calibration-scale-closed-form}
\end{equation}
If $g_{\cL}=0$, all $c\geq0$ give the same mismatch, and we set $c_{\cU}^*(\theta)=0$. By definition,
\begin{equation}
    \norm{g_{\cD}-c_{\cU}^*(\theta)g_{\cL}}
    =
    \kappa_{\cU}(\theta),
    \label{eq:calibration-gap-at-optimum}
\end{equation}
Thus, \eqref{eq:transfer-main} separates the first-order calibration error $\eta\kappa_{\cU}(\theta)\norm{g_\tau}$ from finite-step curvature.

\begin{remark}[Sufficient and exact transfer]
\label{rem:positive-exact-transfer}
Theorem~\ref{thm:transfer} ensures $U_\tau(\theta)>0$ when calibrated learnability exceeds misalignment and curvature. If $\cD(\vartheta)=c\cL(\vartheta)+b$ along the segment for some $c>0$ and $b\in\R$, then $U_\tau(\theta)=c\Lambda_\tau(\theta)$ exactly. Appendix~\ref{app:positive-exact-transfer} formalizes both cases.
\end{remark}

\paragraph{Other optimizers.}
Conditional on a common optimizer state, the bound extends to any realized $\widetilde{\Delta}_\tau\in\cU$, including momentum \citep{polyak1964some}, Adam \citep{kingma2015adam}, AdamW \citep{loshchilov2019decoupled}, and Muon \citep{jordan2024muon,liu2025muon}. Replace $-\eta g_\tau$ by $\widetilde{\Delta}_\tau$; universal collinearity is unchanged (Appendix~\ref{app:proof-optimizer-update}).

\paragraph{Multi-step extension.}
Applying the one-step relation along a realized path compares cumulative surrogate risk and decision risk reductions; a common calibration scale makes both sums telescope.

\begin{corollary}[Cumulative transfer along an update path]
\label{cor:cumulative-transfer}
Starting from $\theta_0$, let
\begin{equation}
    g_t=\Proj\nabla\ell_{\tau_t}(\theta_t),
    \qquad
    \theta_{t+1}=\theta_t-\eta_t g_t,
    \qquad
    t=0,\ldots,T-1,
\end{equation}
where $\eta_t>0$ and the trajectories $\tau_t$ may be selected adaptively from the preceding iterates. Define
\begin{align}
    \Lambda_t
    &=
    \cL(\theta_t)-\cL(\theta_{t+1}),
    &
    U_t
    &=
    \cD(\theta_t)-\cD(\theta_{t+1}),\\
    g_{\cL,t}
    &=
    \Proj\nabla\cL(\theta_t),
    &
    g_{\cD,t}
    &=
    \Proj\nabla\cD(\theta_t).
\end{align}
Assume that $\cL$ and $\cD$ have $\beta_{\cL}$- and $\beta_{\cD}$-Lipschitz gradients, respectively, on a neighborhood containing every segment $[\theta_t,\theta_{t+1}]$. Then, for every fixed $c\geq0$,
\begin{align}
&
\left|
\bigl[\cD(\theta_0)-\cD(\theta_T)\bigr]
-
c\bigl[\cL(\theta_0)-\cL(\theta_T)\bigr]
\right|
\nonumber\\
&\quad\leq
\left|
\sum_{t=0}^{T-1}
\eta_t
\ip{g_{\cD,t}-c g_{\cL,t}}{g_t}
\right|
+
\frac{1}{2}
\sum_{t=0}^{T-1}
\eta_t^2
\bigl(\beta_{\cD}+c\beta_{\cL}\bigr)
\norm{g_t}^2
\nonumber\\
&\quad\leq
\sum_{t=0}^{T-1}
\eta_t
\norm{g_{\cD,t}-c g_{\cL,t}}
\norm{g_t}
+
\frac{1}{2}
\sum_{t=0}^{T-1}
\eta_t^2
\bigl(\beta_{\cD}+c\beta_{\cL}\bigr)
\norm{g_t}^2.
\label{eq:cumulative-transfer}
\end{align}
Let $B_T(c)$ denote the final right-hand side of \eqref{eq:cumulative-transfer}. Equivalently, the final restricted decision excess risk satisfies
\begin{equation}
\cD(\theta_T)-\cD_{\cU}^*
\leq
\cD(\theta_0)-\cD_{\cU}^*
-
c\bigl[\cL(\theta_0)-\cL(\theta_T)\bigr]
+
B_T(c).
\label{eq:cumulative-excess-risk}
\end{equation}
Thus, cumulative surrogate improvement guarantees cumulative decision improvement whenever
\begin{equation}
    c\bigl[\cL(\theta_0)-\cL(\theta_T)\bigr]>B_T(c).
\end{equation}
\end{corollary}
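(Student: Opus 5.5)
The plan is to apply Theorem~\ref{thm:transfer} at each iterate of the realized path and then sum, using telescoping. For each $t$, the iterate $\theta_t$ lies in $\Theta_{\cU}$ by induction: $\theta_0\in\Theta_{\cU}$ and $g_t\in\cU$. The step $\theta_{t+1}=\theta_t-\eta_t g_t$ is exactly the one-step update \eqref{eq:one-step-update}, with $\theta=\theta_t$, $\tau=\tau_t$ and $\eta=\eta_t$.

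Adaptivity of $\tau_t$ is harmless. Theorem~\ref{thm:transfer} is a deterministic statement valid for every trajectory, and we work conditionally on the realized path, so no measurability or independence issue arises. The Lipschitz-gradient hypothesis on each segment $[\theta_t,\theta_{t+1}]$ is exactly what the one-step theorem requires.

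Rather than using the absolute bound \eqref{eq:transfer-main}, I would keep the signed form. Write
\[
U_t-c\Lambda_t=\eta_t\ip{g_{\cD,t}-cg_{\cL,t}}{g_t}+R_t,
\]
where $R_t=\bigl(U_t-\eta_t\ip{g_{\cD,t}}{g_t}\bigr)-c\bigl(\Lambda_t-\eta_t\ip{g_{\cL,t}}{g_t}\bigr)$. By \eqref{eq:surrogate-taylor} and \eqref{eq:decision-taylor}, and since $c\ge0$,
\[
|R_t|\le \tfrac{\eta_t^2}{2}(\beta_{\cD}+c\beta_{\cL})\norm{g_t}^2 .
\]
Here $\ip{\nabla\cD(\theta_t)}{g_t}=\ip{g_{\cD,t}}{g_t}$ because $g_t\in\cU$, and likewise for $\cL$. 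Summing over $t$, the left side telescopes: $\sum_t U_t=\cD(\theta_0)-\cD(\theta_T)$ and $\sum_t\Lambda_t=\cL(\theta_0)-\cL(\theta_T)$. This works because $c$ is common to all steps, which is why the corollary fixes a single $c$.

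The triangle inequality then gives the first line of \eqref{eq:cumulative-transfer}: the absolute value of the summed first-order terms, plus $\sum_t|R_t|$. Applying the triangle inequality inside that sum, followed by Cauchy--Schwarz termwise, gives the second line, which defines $B_T(c)$.

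For \eqref{eq:cumulative-excess-risk}, I would use only the one-sided consequence
\[
\cD(\theta_0)-\cD(\theta_T)\ge c[\cL(\theta_0)-\cL(\theta_T)]-B_T(c).
\]
Rearranging and subtracting $\cD_{\cU}^*$ from both sides gives the excess-risk form. This is valid whether or not $\cD_{\cU}^*$ is finite; if it is $-\infty$ the statement is vacuous. The final claim then follows directly: if $c[\cL(\theta_0)-\cL(\theta_T)]>B_T(c)$, the same inequality gives $\cD(\theta_0)-\cD(\theta_T)>0$.

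There is no real obstacle. The only point needing care is that the per-step remainders must be bounded before any absolute value is placed on the first-order sum, so that the sharper first line, with the absolute value outside the sum, is legitimate. The sign bookkeeping with $c\ge0$ is what lets the curvature constants add as $\beta_{\cD}+c\beta_{\cL}$.
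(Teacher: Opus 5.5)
Your proposal is correct and follows the paper's proof essentially step for step. The paper likewise writes $U_t-c\Lambda_t=\eta_t\ip{g_{\cD,t}-cg_{\cL,t}}{g_t}+e_{\cD,t}-ce_{\cL,t}$ with remainders bounded by Theorem~\ref{thm:transfer}, telescopes under the common $c$, applies the triangle inequality and then Cauchy--Schwarz termwise, and derives \eqref{eq:cumulative-excess-risk} and the final claim from the lower side of the absolute-value bound (your caveat about $\cD_{\cU}^*=-\infty$ is moot, since the bounded reward makes $\cD$ bounded and $\cD_{\cU}^*$ finite).
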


\emph{Proof.} See Appendix~\ref{app:proof-cumulative}.

The result also holds for realized optimizer displacements $\widetilde{\Delta}_t\in\cU$, conditional on their optimizer states, after replacing $\eta_tg_t$ by $-\widetilde{\Delta}_t$ and $\eta_t^2\norm{g_t}^2$ by $\norm{\widetilde{\Delta}_t}^2$.

\paragraph{Fixed versus local calibration scales.}
A common $c$ makes unweighted reductions telescope; iterate-specific $c_t^*$ instead yields a weighted bound. Both permit adaptive selection but compare one realized path; different terminal paths require stability assumptions (Appendix~\ref{app:local-calibration-scales}).

\subsection{When is learnability decision-calibrated?}\label{sec:decision-calibration}

The preceding bounds concern realized updates. Universal directional validity over the entire accessible space requires an exact condition.

\begin{theorem}[Necessary and sufficient condition for universal first-order calibration]
\label{thm:iff}
Assume $g_{\cL}\neq0$. The following statements are equivalent:
\begin{enumerate}[label=(\roman*),nosep,leftmargin=2em]
    \item $\ip{g_{\cL}}{v}>0$ implies $\ip{g_{\cD}}{v}>0$ for every $v\in\cU$.
    \item $g_{\cD}=c g_{\cL}$ for some scalar $c>0$.
\end{enumerate}
Thus, universal first-order transfer from learnability to decision improvement holds if and only if the two population gradients are positively collinear in the accessible update subspace.
\end{theorem}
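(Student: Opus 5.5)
The plan is to prove the two implications separately. Throughout, we work inside the subspace $\cU$, using that both $g_{\cL}$ and $g_{\cD}$ lie in $\cU$ because they are images of $\Proj$.

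The direction (ii)$\Rightarrow$(i) is immediate. If $g_{\cD}=c g_{\cL}$ with $c>0$, then for any $v\in\cU$ we have $\ip{g_{\cD}}{v}=c\ip{g_{\cL}}{v}$. This is positive whenever $\ip{g_{\cL}}{v}>0$.

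For (i)$\Rightarrow$(ii), decompose $g_{\cD}=a g_{\cL}+w$ inside $\cU$. Here $a=\ip{g_{\cD}}{g_{\cL}}/\norm{g_{\cL}}^2$ and $w\in\cU$ with $\ip{w}{g_{\cL}}=0$. This is well defined because $g_{\cL}\neq0$. Two steps then remain.

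\textbf{Step 1: show $w=0$.} Suppose $w\neq0$ and test with $v=\epsilon g_{\cL}-t w$ for $\epsilon>0$ and $t>0$; this $v$ lies in $\cU$. Then $\ip{g_{\cL}}{v}=\epsilon\norm{g_{\cL}}^2>0$. On the other hand, $\ip{g_{\cD}}{v}=a\epsilon\norm{g_{\cL}}^2-t\norm{w}^2$. Choosing $t$ large, for instance $t>|a|\epsilon\norm{g_{\cL}}^2/\norm{w}^2$, makes this nonpositive, which contradicts (i). Hence $g_{\cD}=a g_{\cL}$.

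\textbf{Step 2: show $a>0$.} Take $v=g_{\cL}\in\cU$. Then $\ip{g_{\cL}}{v}=\norm{g_{\cL}}^2>0$, so (i) gives $\ip{g_{\cD}}{g_{\cL}}=a\norm{g_{\cL}}^2>0$, and therefore $a>0$. Setting $c=a$ yields (ii).

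The only delicate point is the test direction in Step 1. It must satisfy the strict inequality $\ip{g_{\cL}}{v}>0$, which is why we use $\epsilon g_{\cL}$ rather than pure $-w$. It must also stay in $\cU$, which holds because $w$ and $g_{\cL}$ both lie in $\cU$. Equivalently, this is the statement that the open half-space $\{v\in\cU:\ip{g_{\cL}}{v}>0\}$ lies inside $\{v\in\cU:\ip{g_{\cD}}{v}>0\}$ only when the normals are positively proportional. The explicit perturbation argument avoids invoking Farkas-type lemmas. The hypothesis $g_{\cL}\neq0$ is used both to define $a$ and to make the half-space nonempty; without it, (i) holds vacuously for any $g_{\cD}$.
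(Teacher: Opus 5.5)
Your proof is correct and follows essentially the same route as the paper's. Both decompose $g_{\cD}$ into a multiple of $g_{\cL}$ plus an orthogonal part (your $w$, the paper's $b$), rule that part out with the test direction $g_{\cL}-\lambda b$ (your $\epsilon g_{\cL}-tw$, where $\epsilon=1$ suffices), and then take $v=g_{\cL}$ to force the coefficient positive; your remark that the test direction lies in $\cU$ because $g_{\cL},g_{\cD}\in\cU$ makes explicit a point the paper leaves implicit.
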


\emph{Proof.} See Appendix~\ref{app:proof-iff}.

\begin{remark}[A miscalibration witness]
If Theorem~\ref{thm:iff} fails, some accessible direction improves the surrogate without improving decision risk to first order; unless $g_{\cD}=0$, it can strictly worsen decision risk. For a finite pool, positive collinearity on the span of its projected trajectory gradients is sufficient and, when transfer is required over that entire span, necessary. Transfer only on realized candidates requires the sign implication candidate by candidate.
\end{remark}

\textbf{\emph{Joint interpretation.}}
This gives a forward-looking training-time signal of deployment-time decision value with respect to the specified objective, characterized at two complementary levels:

(i) positive collinearity of the accessible surrogate and decision gradients is the exact condition for the signal to be universally valid to first order; and 

(ii) along the realized update path, cumulative surrogate improvement is guaranteed to transfer to decision utility when its calibrated gain outweighs the accumulated mismatch and curvature.

\subsection{Consequence for trajectory selection}

For a finite candidate pool, the same calibration gap bounds the decision utility lost by selecting trajectories according to learnability rather than a decision-aware oracle.

\begin{corollary}[Regret of learnability-based selection]
\label{cor:selection}
Under the assumptions of Theorem~\ref{thm:transfer}, let $\cT$ be a finite candidate pool satisfying $\norm{g_\tau}\leq G$ for every $\tau\in\cT$. Define
\begin{equation}
\begin{aligned}
    \tau_{\cL}\in\arg\max_{\tau\in\cT}\Lambda_\tau(\theta),
    \qquad
    \tau_{\cD}\in\arg\max_{\tau\in\cT}U_\tau(\theta),\\[-0.2em]
    c_{\cU}^*\in\arg\min_{c\geq0}\norm{g_{\cD}-cg_{\cL}}.
\end{aligned}
\end{equation}
Then
\begin{equation}
0
\leq
U_{\tau_{\cD}}(\theta)-U_{\tau_{\cL}}(\theta)
\leq
2\eta\kappa_{\cU}(\theta)G
+
\eta^2
\bigl(\beta_{\cD}+c_{\cU}^*\beta_{\cL}\bigr)G^2.
\label{eq:selection-bound}
\end{equation}
Hence, the bound guarantees near-optimal one-step decision improvement for learnability-based selection when the restricted calibration gap and the finite-step curvature term are both small.
\end{corollary}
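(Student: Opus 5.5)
The lower bound is immediate from the definition of $\tau_{\cD}$ as a maximizer of $U_\tau(\theta)$ over $\cT$, so all the work is in the upper bound. The plan is to compare $U_{\tau_{\cD}}$ and $U_{\tau_{\cL}}$ through the calibrated surrogate $c_{\cU}^*\Lambda_\tau$, using Theorem~\ref{thm:transfer} with the choice $c=c_{\cU}^*$ for each of the two selected trajectories. Write $c=c_{\cU}^*(\theta)$. By \eqref{eq:calibration-gap-at-optimum} we have $\norm{g_{\cD}-cg_{\cL}}=\kappa_{\cU}(\theta)$, and by assumption $\norm{g_\tau}\le G$ for each $\tau\in\cT$. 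Hence \eqref{eq:transfer-main} gives, for every $\tau\in\cT$,
\[
\bigl|U_\tau(\theta)-c\Lambda_\tau(\theta)\bigr|\le \eta\kappa_{\cU}(\theta)G+\tfrac{\eta^2}{2}\bigl(\beta_{\cD}+c\beta_{\cL}\bigr)G^2=:E.
\]

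Next I would split the regret into three pieces:
\[
U_{\tau_{\cD}}-U_{\tau_{\cL}}
=\bigl[U_{\tau_{\cD}}-c\Lambda_{\tau_{\cD}}\bigr]
+c\bigl[\Lambda_{\tau_{\cD}}-\Lambda_{\tau_{\cL}}\bigr]
+\bigl[c\Lambda_{\tau_{\cL}}-U_{\tau_{\cL}}\bigr].
\]
The middle term is $\le 0$ because $c\ge0$ and $\tau_{\cL}$ maximizes $\Lambda_\tau$ over $\cT$. Each outer term is at most $E$ by the display above. Summing gives the bound $2E=2\eta\kappa_{\cU}G+\eta^2(\beta_{\cD}+c_{\cU}^*\beta_{\cL})G^2$, which is exactly \eqref{eq:selection-bound}.

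The only point needing care is that $c$ must be nonnegative, so that the sign of the middle term is preserved. This holds because the scale is chosen by the $\arg\min$ over $c\ge0$ (and equals $0$ in the degenerate case $g_{\cL}=0$). We also need the smoothness hypotheses of Theorem~\ref{thm:transfer} to cover the segments for both $\tau_{\cD}$ and $\tau_{\cL}$, which the corollary assumes. Beyond that the argument is routine, with no real obstacle.
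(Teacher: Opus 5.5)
Your proposal is correct and follows essentially the same route as the paper's proof. Both arguments apply Theorem~\ref{thm:transfer} at $c=c_{\cU}^*$ to get a uniform band $|U_\tau-c_{\cU}^*\Lambda_\tau|\le\delta$ over $\cT$, then use $c_{\cU}^*\Lambda_{\tau_{\cD}}\le c_{\cU}^*\Lambda_{\tau_{\cL}}$ (from $c_{\cU}^*\ge0$) to conclude the regret is at most $2\delta$; your three-term decomposition is just the paper's chain of inequalities written in one line.
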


\emph{Proof.} See Appendix~\ref{app:proof-selection}.

\begin{corollary}[Candidate-difference selection bound]
\label{cor:candidate-difference-selection}
Under the assumptions and notation of Corollary~\ref{cor:selection}, define
the candidate-difference space
\begin{equation}
    \mathcal V_{\cT}
    =
    \operatorname{span}
    \{g_\tau-g_\sigma:\tau,\sigma\in\cT\}
    \subseteq\cU
\end{equation}
and, for $c\geq0$, the worst pairwise first-order discrepancy
\begin{equation}
    \Gamma_{\cT}(c)
    =
    \max_{\tau,\sigma\in\cT}
    \left|
    \ip{
    P_{\mathcal V_{\cT}}(g_{\cD}-c g_{\cL})
    }{
    g_\tau-g_\sigma
    }
    \right|,
    \label{eq:candidate-difference-gap}
\end{equation}
where $P_{\mathcal V_{\cT}}$ is the orthogonal projector onto
$\mathcal V_{\cT}$. Then
\begin{align}
0
&\leq
U_{\tau_{\cD}}(\theta)-U_{\tau_{\cL}}(\theta)
\nonumber\\
&\leq
\inf_{c\geq0}
\left\{
\eta\Gamma_{\cT}(c)
+
\eta^2
\bigl(\beta_{\cD}+c\beta_{\cL}\bigr)G^2
\right\}
\nonumber\\
&\leq
2\eta\kappa_{\cU}(\theta)G
+
\eta^2
\bigl(\beta_{\cD}+c_{\cU}^*\beta_{\cL}\bigr)G^2.
\label{eq:candidate-difference-selection-bound}
\end{align}
Thus, full-space misalignment orthogonal to $\mathcal V_{\cT}$ does not
affect the first-order pairwise ranking discrepancy. Moreover, for any
$c>0$, $\tau_{\cL}$ is the unique decision-utility maximizer if, for every
$\sigma\neq\tau_{\cL}$,
\begin{align}
c\bigl(
\Lambda_{\tau_{\cL}}-\Lambda_\sigma
\bigr)
>
&\;
\eta
\left|
\ip{
P_{\mathcal V_{\cT}}(g_{\cD}-c g_{\cL})
}{
g_{\tau_{\cL}}-g_\sigma
}
\right|
\nonumber\\
&+
\frac{\eta^2}{2}
\bigl(\beta_{\cD}+c\beta_{\cL}\bigr)
\left(
\norm{g_{\tau_{\cL}}}^2+\norm{g_\sigma}^2
\right).
\label{eq:candidate-margin-certificate}
\end{align}
Hence, selection can remain stable despite a large full-space gap when
the residual is small on realized candidate differences or the learnability
margin is sufficiently large.
\end{corollary}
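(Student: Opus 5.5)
The argument rests on the two Taylor bounds \eqref{eq:surrogate-taylor}--\eqref{eq:decision-taylor} of Theorem~\ref{thm:transfer}, applied to pairs of candidates. The lower bound $0\le U_{\tau_{\cD}}-U_{\tau_{\cL}}$ holds by definition of $\tau_{\cD}$.

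\textbf{Pairwise comparison.} Fix $c\ge0$ and candidates $\tau,\sigma\in\cT$. Subtracting the two Taylor bounds for $\tau$ and for $\sigma$ gives
\[
\bigl|(U_\tau-U_\sigma)-c(\Lambda_\tau-\Lambda_\sigma)-\eta\ip{g_{\cD}-cg_{\cL}}{g_\tau-g_\sigma}\bigr|
\le \tfrac{\eta^2}{2}(\beta_{\cD}+c\beta_{\cL})\bigl(\norm{g_\tau}^2+\norm{g_\sigma}^2\bigr).
\]
The key observation is that $g_\tau-g_\sigma\in\mathcal V_{\cT}$ and $P_{\mathcal V_{\cT}}$ is self-adjoint. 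Hence
\[
\ip{g_{\cD}-cg_{\cL}}{g_\tau-g_\sigma}=\ip{P_{\mathcal V_{\cT}}(g_{\cD}-cg_{\cL})}{g_\tau-g_\sigma},
\]
so the first-order pairwise term has absolute value at most $\Gamma_{\cT}(c)$. Take $\tau=\tau_{\cD}$ and $\sigma=\tau_{\cL}$. Because $c\ge0$ and $\Lambda_{\tau_{\cD}}-\Lambda_{\tau_{\cL}}\le0$, the term $c(\Lambda_{\tau_{\cD}}-\Lambda_{\tau_{\cL}})$ can be dropped from the upper bound. Using $\norm{g}\le G$, this yields
\[
U_{\tau_{\cD}}-U_{\tau_{\cL}}\le \eta\Gamma_{\cT}(c)+\eta^2(\beta_{\cD}+c\beta_{\cL})G^2 .
\]
Taking the infimum over $c\ge0$ gives the middle inequality.

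\textbf{Comparison with Corollary~\ref{cor:selection}.} Evaluate the infimum at $c=c_{\cU}^*$. Since a projector is nonexpansive, Cauchy--Schwarz gives
\[
\Gamma_{\cT}(c_{\cU}^*)\le \norm{g_{\cD}-c_{\cU}^*g_{\cL}}\max_{\tau,\sigma}\norm{g_\tau-g_\sigma}\le \kappa_{\cU}(\theta)\cdot 2G,
\]
where the last step uses \eqref{eq:calibration-gap-at-optimum}. This recovers the final inequality.

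\textbf{Margin certificate.} Fix $c>0$ and $\sigma\neq\tau_{\cL}$, and apply the pairwise display with $\tau=\tau_{\cL}$. It gives
\[
U_{\tau_{\cL}}-U_\sigma\ge c(\Lambda_{\tau_{\cL}}-\Lambda_\sigma)-\eta\bigl|\ip{P_{\mathcal V_{\cT}}(g_{\cD}-cg_{\cL})}{g_{\tau_{\cL}}-g_\sigma}\bigr|-\tfrac{\eta^2}{2}(\beta_{\cD}+c\beta_{\cL})\bigl(\norm{g_{\tau_{\cL}}}^2+\norm{g_\sigma}^2\bigr).
\]
Condition \eqref{eq:candidate-margin-certificate} makes the right-hand side strictly positive, so $U_{\tau_{\cL}}>U_\sigma$ for every $\sigma\neq\tau_{\cL}$. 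Hence $\tau_{\cL}$ is the unique decision-utility maximizer.

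\textbf{Points of care.} The projection step is routine; the one subtle point is the sign of the $c(\Lambda_{\tau_{\cD}}-\Lambda_{\tau_{\cL}})$ term, which is where $c\ge0$ and the argmax property of $\tau_{\cL}$ are used. It is also worth noting that the bound $\norm{g_\tau-g_\sigma}\le 2G$ is what supplies the factor $2$ in the final bound.
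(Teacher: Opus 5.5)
Your proposal is correct and follows essentially the same route as the paper's proof. Both subtract the per-candidate Taylor decompositions pairwise and replace $g_{\cD}-cg_{\cL}$ by its projection onto $\mathcal V_{\cT}$ (since $g_\tau-g_\sigma\in\mathcal V_{\cT}$). Both then drop $c(\Lambda_{\tau_{\cD}}-\Lambda_{\tau_{\cL}})\le 0$, evaluate at $c=c_{\cU}^*$ using nonexpansiveness of the projection and $\norm{g_\tau-g_\sigma}\le 2G$, and rearrange the pairwise relation at $\tau=\tau_{\cL}$ to get the margin certificate.
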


\emph{Proof.} See Appendix~\ref{app:proof-selection-difference}.

\textbf{\emph{Joint interpretation.}}
During model training, this gives an operational principle for learnability-based trajectory selection over a finite candidate pool under local smoothness and bounded candidate gradients:

(i) the decision-utility loss incurred by selecting the learnability maximizer is controlled by the full-space calibration gap and curvature; and

(ii) restricting attention to candidate-difference directions yields a tighter bound and a margin certificate for when the learnability winner is also the decision-utility winner.

\subsection{The role of the accessible update space}

Because the first-order calibration in Theorem~\ref{thm:transfer} depends on space $\cU$ (see Appendix~\ref{app:proof-subspace}), the trainable-parameter budget affects both capacity and alignment. Equivalently,
\begin{equation}
\kappa_{\cU}(\theta)
=
\inf_{c\geq0}
\norm{P_{\cU}\bigl(\nabla\cD(\theta)-c\nabla\cL(\theta)\bigr)}.
\end{equation}

\begin{proposition}[Approximation--calibration trade-off]
\label{prop:subspace}
Let $\cU_1\subseteq\cU_2$ be two update subspaces with the same base parameter $\theta_0$. Then we get
\begin{equation}
    \mathcal A_{\cD}(\cU_1)
    \geq
    \mathcal A_{\cD}(\cU_2),
    \qquad
    \kappa_{\cU_1}(\theta)
    \leq
    \kappa_{\cU_2}(\theta).
    \label{eq:subspace-tradeoff}
\end{equation}
Thus, a larger trainable space can lower the best attainable decision risk while exposing additional surrogate--decision mismatch.
\end{proposition}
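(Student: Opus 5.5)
The proposition makes two separate monotonicity claims, and I will prove each directly from the definitions; no earlier theorem is needed.

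\emph{Approximation error.} Since $\cU_1\subseteq\cU_2$ and both spaces share the base point $\theta_0$, the accessible classes are nested: $\Theta_{\cU_1}=\theta_0+\cU_1\subseteq\theta_0+\cU_2=\Theta_{\cU_2}$. An infimum over a larger set can only be smaller, so $\cD_{\cU_1}^*\geq\cD_{\cU_2}^*$. Subtracting the common constant $\cD^*$, as in \eqref{eq:risk-decomposition}, gives $\mathcal A_{\cD}(\cU_1)\geq\mathcal A_{\cD}(\cU_2)$. If $\cD^*=-\infty$ we interpret this in the extended reals; because $r$ is bounded, $\cD$ is bounded and every quantity here is finite.

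\emph{Calibration gap.} I will use the displayed identity $\kappa_{\cU}(\theta)=\inf_{c\geq0}\norm{P_{\cU}w_c}$, where $w_c=\nabla\cD(\theta)-c\nabla\cL(\theta)$. This identity is just linearity of $P_{\cU}$ applied to $g_{\cD}-cg_{\cL}$. The key fact is that nested orthogonal projectors satisfy $P_{\cU_1}=P_{\cU_1}P_{\cU_2}$. Hence for every fixed $c\geq0$,
\[
\norm{P_{\cU_1}w_c}=\norm{P_{\cU_1}(P_{\cU_2}w_c)}\leq\norm{P_{\cU_2}w_c},
\]
since an orthogonal projector has operator norm at most $1$. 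Equivalently, by Pythagoras, $\norm{P_{\cU_2}w}^2=\norm{P_{\cU_1}w}^2+\norm{(P_{\cU_2}-P_{\cU_1})w}^2$. This pointwise inequality holds for each $c$, so taking the infimum over $c\geq0$ on both sides preserves it and yields $\kappa_{\cU_1}(\theta)\leq\kappa_{\cU_2}(\theta)$.

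Two points need care, though neither is a real obstacle. First, $\theta$ must be a common iterate. It lies in $\Theta_{\cU_1}\subseteq\Theta_{\cU_2}$, so both gaps are evaluated at the same point with the same full gradients $\nabla\cD(\theta)$ and $\nabla\cL(\theta)$. Second, the inequality must be proved for each fixed $c$ before taking infima; this avoids any dependence on the optimal scales $c^*_{\cU_1}$ and $c^*_{\cU_2}$, which may differ between the two spaces. The only step needing justification is $P_{\cU_1}P_{\cU_2}=P_{\cU_1}$. It follows because $\cU_2^\perp\subseteq\cU_1^\perp$: decompose $w=P_{\cU_2}w+(I-P_{\cU_2})w$, and $P_{\cU_1}$ annihilates the second term.
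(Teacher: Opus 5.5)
Your proof is correct and matches the paper's argument: nested feasible sets $\Theta_{\cU_1}\subseteq\Theta_{\cU_2}$ give the approximation inequality, and a projection comparison that holds for each fixed $c\geq0$, followed by taking the infimum over $c$, gives the calibration inequality. The only difference is cosmetic. You derive $\norm{P_{\cU_1}w_c}\leq\norm{P_{\cU_2}w_c}$ from $P_{\cU_1}=P_{\cU_1}P_{\cU_2}$ and the fact that an orthogonal projector does not increase norms. The paper instead checks that $(P_{\cU_2}-P_{\cU_1})z\in\cU_2\cap\cU_1^\perp$ and applies the Pythagorean identity, which you also note as an equivalent route.
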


\emph{Proof.} See Appendix~\ref{app:proof-subspace}.

\section{Experiments}
\label{sec:experiments}

We test the theory in a finite gridworld with exact returns and controlled alignment and at an LLM checkpoint. From a common checkpoint, candidates induce the projected-SGD update in \eqref{eq:one-step-update}, evaluated on fixed disjoint probes; Appendix~\ref{app:experimental-details} gives all relevant details. 

\subsection{Experimental Setup}
\label{sec:experimental-setup}

\paragraph{Synthetic RL.}
Using a PPO-initialized MLP in fully observed two-goal MiniGrid \citep{chevalier2023minigrid,schulman2017ppo}, we form behavioral-cloning updates from optimal, suboptimal, and wrong-goal demonstrations \citep{ross2011reduction}; their fraction $\rho$ controls misalignment. Decision risk is exact negative expected return, and nested coordinate spaces test update-space effects.

\paragraph{LLM post-training.} We analyze Qwen3.5-0.8B-Base \citep{qwen2026qwen35} on AQuA-RAT~\citep{ling2017program}, a five-choice mathematical-reasoning dataset with rationales. With rank-8 LoRA \citep{hu2022lora}, the surrogate is token-mean NLL on cleaned rationale and answer, while decision risk is negative correct-option probability from the five renormalized label logits conditioned on the gold rationale. Counterfactual corruption alters only surrogate-probe labels, while update-space experiments vary the final $k$ trainable blocks.

\subsection{One-Step and Cumulative Transfer}
\label{sec:empirical-transfer}

\begin{figure}[t]
\centering
\begin{minipage}[t]{0.31\linewidth}
\centering
\scriptsize\textbf{(a) Synthetic: one-step}\par\vspace{-0.5ex}
\includegraphics[width=\linewidth]{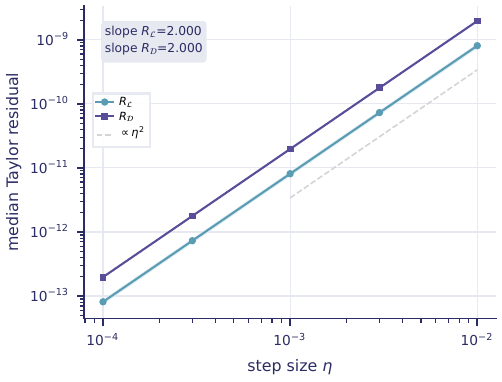}
\end{minipage}\hfill
\begin{minipage}[t]{0.31\linewidth}
\centering
\scriptsize\textbf{(b) Synthetic: cumulative}\par\vspace{-0.5ex}
\includegraphics[width=\linewidth]{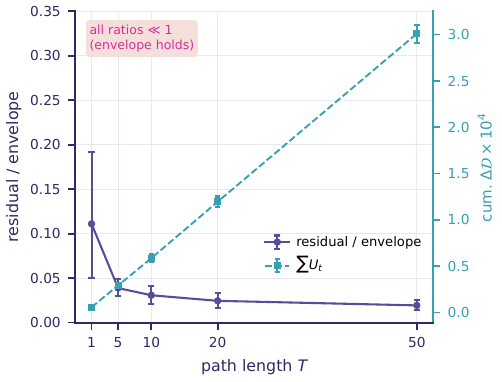}
\end{minipage}\hfill
\begin{minipage}[t]{0.31\linewidth}
\centering
\scriptsize\textbf{(c) LLM: one-step}\par\vspace{-0.5ex}
\includegraphics[width=\linewidth]{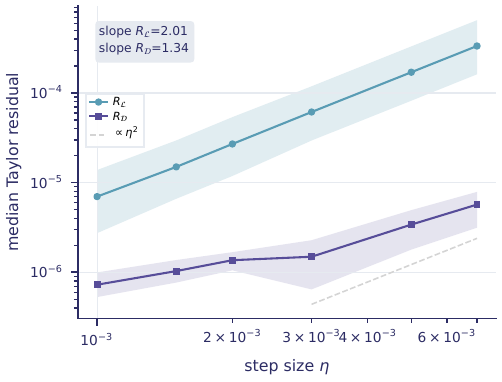}
\end{minipage}
\caption{\textbf{Local and pathwise transfer.}
One-step residuals in synthetic RL \textbf{(a)} and LLM post-training \textbf{(c)}, and synthetic pathwise discrepancies \textbf{(b)}. Synthetic \textbf{(b)} uses an empirical Hessian plug-in. Bands are seed-level 95\% bootstrap intervals.}
\label{fig:transfer}
\end{figure}

\paragraph{Synthetic RL.}
As shown in figure~\ref{fig:transfer}(a,b), both residuals have log--log slope $2.000$, matching Theorem~\ref{thm:transfer}; along repeated updates, the cumulative discrepancy remains below the plug-in envelope in every run and reaches a ratio of $0.019$ at $T=50$.

\begin{table}[t]
\centering
\caption{\textbf{LLM pathwise diagnostics.}
Five-seed medians with 95\% bootstrap intervals; quantities are defined in Appendix~\ref{app:cumulative-protocol}, and $Q_T$ and $N_T/A_T$ are scaled by $10^3$.}
\label{tab:llm-pathwise}
\scriptsize
\setlength{\tabcolsep}{5pt}
\begin{tabular}{@{}rccc@{}}
\toprule
$T$ & $Q_T(c)\;(10^{-3})$ & $Q_T(c)/|S_T(c)|$ & $N_T(c)/A_T(c)\;(10^{-3})$ \\
\midrule
$1$  & $0.0277\,[0.0147,\,0.191]$ & $0.269\,[0.068,\,0.538]$  & $14.48\,[9.32,\,24.87]$ \\
$5$  & $0.253\,[0.132,\,0.603]$  & $0.345\,[0.194,\,1.691]$  & $10.76\,[1.60,\,22.07]$ \\
$10$ & $0.695\,[0.148,\,6.093]$  & $1.170\,[0.298,\,33.105]$ & $7.21\,[4.19,\,20.16]$ \\
$20$ & $4.875\,[1.115,\,7.784]$  & $0.932\,[0.452,\,29.953]$ & $4.49\,[1.35,\,17.84]$ \\
$50$ & $9.150\,[3.378,\,21.820]$ & $1.191\,[0.929,\,1.993]$  & $4.42\,[1.64,\,16.62]$ \\
\bottomrule
\end{tabular}
\end{table}

\paragraph{LLM post-training.} See figure~\ref{fig:transfer}(c) and table~\ref{tab:llm-pathwise}. The surrogate residual is nearly quadratic (slope $2.01\,[1.90,\,2.15]$), whereas the decision estimate is noisier ($1.34\,[1.07,\,1.68]$ over four fitted seeds) near the numerical floor. First-order geometry is informative over short paths: $Q_T/|S_T|$ is $0.27$ at $T=1$ and $0.35$ at $T=5$, but reaches $1.19\,[0.93,\,1.99]$ at $T=50$ as higher-order effects become comparable to the signed term; wide intermediate intervals reflect cancellation. Meanwhile $N_T/A_T$ falls from $1.45\times10^{-2}$ to $4.4\,[1.6,\,16.6]\times10^{-3}$. Thus the short-horizon account is informative, but without a curvature envelope the long-path result remains descriptive rather than a certificate for \eqref{eq:cumulative-transfer}.

\subsection{Calibration Geometry and Misalignment Witnesses}
\label{sec:empirical-calibration}

\begin{figure}[t]
\centering
\begin{minipage}[t]{0.275\linewidth}
\centering
\scriptsize\textbf{(a) Calibration geometry}\par\vspace{-0.5ex}
\includegraphics[width=\linewidth]{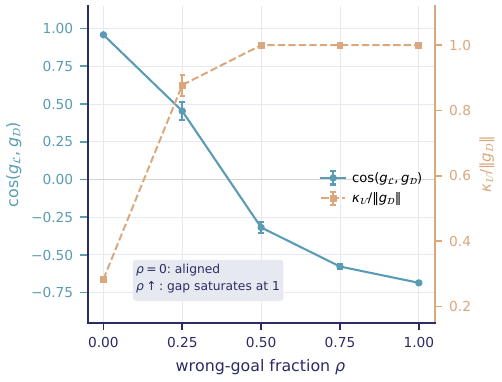}
\end{minipage}\hfill
\begin{minipage}[t]{0.275\linewidth}
\centering
\scriptsize\textbf{(b) Misalignment witness}\par\vspace{-0.5ex}
\includegraphics[width=\linewidth]{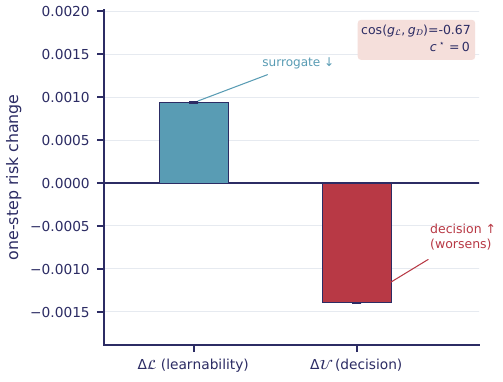}
\end{minipage}\hfill
\begin{minipage}[t]{0.275\linewidth}
\centering
\scriptsize\textbf{(c) Trajectory selection}\par\vspace{-0.5ex}
\includegraphics[width=\linewidth]{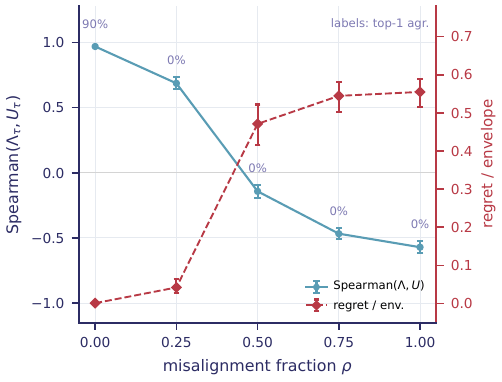}
\end{minipage}
\caption{\textbf{Calibration failure and selection in synthetic RL.}
Increasing $\rho$ rotates the gradients, produces a decision-harmful witness, and degrades learnability-based ranking.}
\label{fig:synthetic-misalignment}
\end{figure}

\begin{figure}[t]
\centering
\begin{minipage}[t]{0.25\linewidth}
\centering
\scriptsize\textbf{(a) Synthetic RL}\par\vspace{-0.5ex}
\includegraphics[width=\linewidth]{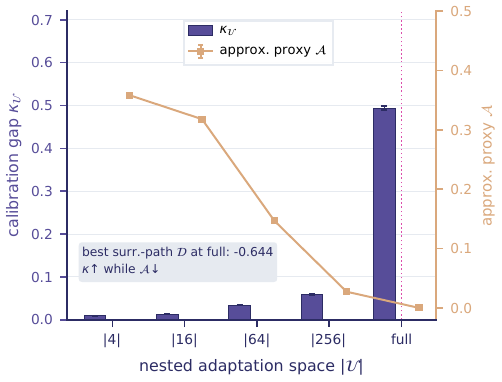}
\end{minipage}\hfill
\begin{minipage}[t]{0.25\linewidth}
\centering
\scriptsize\textbf{(b) LLM calibration}\par\vspace{-0.5ex}
\includegraphics[width=\linewidth]{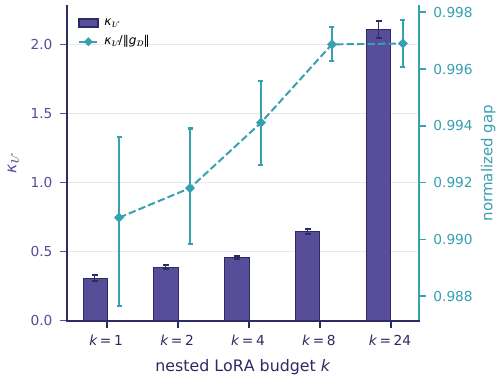}
\end{minipage}\hfill
\begin{minipage}[t]{0.25\linewidth}
\centering
\scriptsize\textbf{(c) LLM approximation}\par\vspace{-0.5ex}
\includegraphics[width=\linewidth]{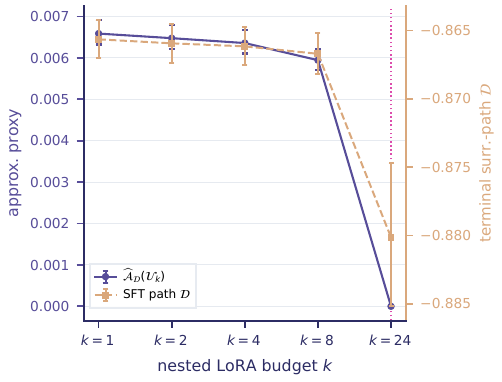}
\end{minipage}
\caption{\textbf{Update-space trade-off.}
Synthetic nested policy spaces \textbf{(a)} and nested LLM LoRA masks \textbf{(b,c)}. Comparisons share a checkpoint; largest-space proxies are zero by definition.}
\label{fig:update-space}
\end{figure}

\paragraph{Synthetic RL.}
As $\rho$ grows, cosine alignment falls from $0.96$ to $-0.69$ and the normalized gap rises from $0.28$ to $1.00$; the resulting witness satisfies $\Lambda(v)=9.4\times10^{-4}>0>U(v)=-1.4\times10^{-3}$ in all 20 seeds, as predicted when Theorem~\ref{thm:iff} fails (Figure~\ref{fig:synthetic-misalignment}(a,b)).

\begin{table}[t]
\caption{\textbf{LLM calibration and cross-fitted witnesses.}
Five-seed medians with 95\% bootstrap intervals (Appendix~\ref{app:llm-crossfit}).}
\label{tab:llm-calibration}
\centering
\scriptsize
\renewcommand{\arraystretch}{1.00}
\begin{minipage}[t]{0.5\linewidth}
\centering
\textbf{(a) Calibration geometry}\par\vspace{0.5ex}
\setlength{\tabcolsep}{5pt}
\resizebox{\linewidth}{!}{%
\begin{tabular}{@{}lcc@{}}
\toprule
Statistic
& Natural ($\rho=0$)
& Counterfactual ($\rho=\rho_{\max}$) \\
\midrule
$\cos(g_{\cL},g_{\cD})$
& $0.074\,[0.069,\,0.087]$
& $-0.605\,[-0.721,\,-0.577]$ \\
$\kappa_{\cU}/\norm{g_{\cD}}$
& $0.997\,[0.996,\,0.998]$
& $1.000\,[1.000,\,1.000]$ \\
$c_{\cU}^*$
& $0.249\,[0.210,\,0.285]$
& $0.000\,[0.000,\,0.000]$ \\
\bottomrule
\end{tabular}}
\end{minipage}\hfill
\begin{minipage}[t]{0.48\linewidth}
\centering
\textbf{(b) Held-out witness}\par\vspace{0.5ex}
\setlength{\tabcolsep}{5pt}
\resizebox{\linewidth}{!}{%
\begin{tabular}{@{}lc@{}}
\toprule
Statistic & Counterfactual ($\rho=\rho_{\max}$) \\
\midrule
Held-out surrogate reduction $\Lambda(v)$
& $1.34\,[1.24,\,1.48]\times10^{-3}$ \\
Held-out decision utility $U(v)$
& $-4.65\,[-5.88,\,-4.54]\times10^{-3}$ \\
Seeds with $\Lambda(v)>0>U(v)$
& $5/5$ \\
\bottomrule
\end{tabular}}
\end{minipage}
\end{table}

\paragraph{LLM post-training.} As shown in Table~\ref{tab:llm-calibration}(a), natural supervision is nearly orthogonal to the decision gradient (cosine $0.074$; normalized gap $0.997$), while corruption makes the gradients oppose (cosine $-0.605$) and sets $c_{\cU}^*=0$. In all five seeds, the cross-fitted witness remains harmful out of fold after a finite update: $\Lambda(v)=1.34\times10^{-3}>0>U(v)=-4.65\times10^{-3}$ (Table~\ref{tab:llm-calibration}(b)).

\subsection{Trajectory Selection}
\label{sec:empirical-selection}

\paragraph{Synthetic RL.}
As $\rho$ grows, Spearman correlation falls from $0.97$ to $-0.57$, regret rises from $5.4\times10^{-9}$ to $9.4\times10^{-6}$, and top-one agreement falls from $0.90$ to zero, showing that controlled misalignment reverses ranking reliability (Figure~\ref{fig:synthetic-misalignment}(c)).

\paragraph{LLM post-training.}
Corruption lowers Spearman$(\Lambda,U)$ from $0.15$ to $-0.17$, whereas the held-out oracle gap remains a nonmonotone $8.72$--$9.16\times10^{-4}$. Only at $\rho=0.75$ does a rank crossing lower utility; no seed matches the oracle, and references stay fixed because their candidate bank is unchanged (Table~\ref{tab:llm-selection}).

\begin{table}[t]
\centering
\caption{\textbf{Cross-fitted LLM trajectory selection.}
Five-seed medians with 95\% bootstrap intervals; panel~(b) gives $\rho$-invariant references.}
\label{tab:llm-selection}
\scriptsize
\begin{minipage}[t]{0.63\linewidth}
\centering
\textbf{(a) Corruption-dependent selection}\par\vspace{0.5ex}
\resizebox{\linewidth}{!}{%
\begin{tabular}{@{}crrrrrr@{}}
\toprule
& \multicolumn{2}{c}{Spearman$(\Lambda,U)$}
& \multicolumn{2}{c}{Learnability-selected $U$ ($10^{-4}$)}
& \multicolumn{2}{c}{Held-out oracle gap ($10^{-4}$)} \\
\cmidrule(lr){2-3}\cmidrule(lr){4-5}\cmidrule(l){6-7}
$\rho$ & Median & Bootstrap CI
& Median & Bootstrap CI
& Median & Bootstrap CI \\
\midrule
$0$
& $0.149$ & $[0.007,\,0.511]$
& $1.36$ & $[0.38,\,3.21]$
& $8.72$ & $[5.56,\,9.16]$ \\
$0.25$
& $0.070$ & $[-0.122,\,0.349]$
& $1.36$ & $[0.38,\,3.21]$
& $8.72$ & $[5.56,\,9.16]$ \\
$0.50$
& $-0.068$ & $[-0.265,\,0.215]$
& $1.36$ & $[0.38,\,3.21]$
& $8.72$ & $[5.56,\,9.16]$ \\
$0.75$
& $-0.170$ & $[-0.357,\,0.110]$
& $0.40$ & $[-4.68,\,1.36]$
& $9.16$ & $[5.56,\,13.82]$ \\
\bottomrule
\end{tabular}}
\end{minipage}\hfill
\begin{minipage}[t]{0.35\linewidth}
\centering
\textbf{(b) $\rho$-invariant references}\par\vspace{0.5ex}
\resizebox{\linewidth}{!}{%
\begin{tabular}{@{}lrr@{}}
\toprule
Selector & Median $U$ & Bootstrap CI \\
& \multicolumn{2}{c}{($10^{-4}$)} \\
\midrule
Random & $-0.867$ & $[-1.45,\,2.37]$ \\
Gradient-cosine & $0.765$ & $[-0.790,\,11.61]$ \\
Decision oracle & $9.40$ & $[5.96,\,11.27]$ \\
\bottomrule
\end{tabular}}
\end{minipage}
\end{table}

\paragraph{Candidate-difference refinement.}
The natural LLM full-space gap is $0.997$, yet regret is nonmonotone (Table~\ref{tab:llm-calibration}(a) and Table~\ref{tab:llm-selection}(a)). Using the candidate-difference quantities in Corollary~\ref{cor:candidate-difference-selection}, Table~\ref{tab:candidate-difference-selection} gives refined/full ratios of $0.31$--$0.60$ (synthetic) and $0.027$--$0.031$ (LLM FO). Under the tie-aware plateau-margin check, the synthetic diagnostic passes in $19/20$ seeds at $\rho=0$ but in $0/20$ once $\rho\geq0.25$, while the LLM FO diagnostic passes in $0/5$ throughout. Thus, candidate differences remove substantial ranking-invisible residuals, but passing disappears once ranking mismatch remains.

\begin{table}[t]
\centering
\caption{\textbf{Candidate-difference selection diagnostics.}
Each $\rho$ group reports the seed median, 95\% bootstrap interval,
and tie-aware plateau-margin-check pass count. Synthetic RL uses plug-in envelopes including curvature,
whereas LLM uses first-order terms.}
\label{tab:candidate-difference-selection}
\scriptsize
\setlength{\tabcolsep}{2.5pt}
\renewcommand{\arraystretch}{1.15}
\resizebox{\linewidth}{!}{%
\begin{tabular}{@{}l*{5}{ccc}@{}}
\toprule
&
\multicolumn{3}{c}{$\rho=0$}
&
\multicolumn{3}{c}{$\rho=0.25$}
&
\multicolumn{3}{c}{$\rho=0.50$}
&
\multicolumn{3}{c}{$\rho=0.75$}
&
\multicolumn{3}{c}{$\rho=1.00$}
\\
\cmidrule(lr){2-4}
\cmidrule(lr){5-7}
\cmidrule(lr){8-10}
\cmidrule(lr){11-13}
\cmidrule(l){14-16}
Setting and ratio
& Median & Interval & Pass
& Median & Interval & Pass
& Median & Interval & Pass
& Median & Interval & Pass
& Median & Interval & Pass
\\
\midrule
\shortstack[l]{Synthetic RL\\
$B_{\mathrm{diff}}/B_{\mathrm{full}}$}
& $0.311$ & $[0.292,\,0.357]$ & $19/20$
& $0.470$ & $[0.441,\,0.544]$ & $0/20$
& $0.603$ & $[0.540,\,0.616]$ & $0/20$
& $0.603$ & $[0.542,\,0.616]$ & $0/20$
& $0.603$ & $[0.542,\,0.616]$ & $0/20$
\\
\shortstack[l]{LLM\\
$B_{\mathrm{diff}}^{\mathrm{FO}}/
 B_{\mathrm{full}}^{\mathrm{FO}}$}
& $0.027$ & $[0.008,\,0.033]$ & $0/5$
& $0.028$ & $[0.009,\,0.033]$ & $0/5$
& $0.030$ & $[0.009,\,0.035]$ & $0/5$
& $0.031$ & $[0.010,\,0.036]$ & $0/5$
& --- & --- & ---
\\
\bottomrule
\end{tabular}%
}
\end{table}

\subsection{Update-Space Trade-off}
\label{sec:empirical-adaptation}

\paragraph{Synthetic RL.}
Expanding the space lowers the approximation proxy from $0.36$ to zero and raises $\kappa_{\cU}$ from $0.009$ to $0.49$; the full space has the lowest terminal risk in all 20 seeds (Figure~\ref{fig:update-space}(a)).

\paragraph{LLM post-training.}
Across LoRA budgets, raw $\kappa_{\cU}$ rises from $0.30$ to $2.12$ while the approximation proxy falls from $6.47\times10^{-3}$ to zero; $k=24$ has the lowest terminal risk in all five seeds. This qualitatively supports Proposition~\ref{prop:subspace}, although the proxy is not $\mathcal A_{\cD}$ (Figure~\ref{fig:update-space}(b,c)).

\section{Limitations}
\label{sec:limitations}

The guarantees are local to fixed objectives, an iterate, restricted update space, and realized path. They neither establish finite-sample generalization from probe-estimated calibration to future deployment distributions nor cover approximate, noisy, or randomized top-$k$ selection, selector-induced path comparisons, or changing on-policy distributions. At LLM scale, the required smoothness constants are not certified, so the Taylor and pathwise results remain diagnostics. Finally, the selection bounds are sufficient and potentially conservative rather than converses, even after candidate-difference refinement.

\section{Conclusion}
\label{sec:conclusion}

We developed a trajectory-wise local theory of when surrogate-driven updates improve downstream decisions. The one-step bound separates accessible-gradient misalignment from curvature, its cumulative extension tracks these errors along a realized path, and, when the accessible surrogate gradient is nonzero, positive collinearity exactly characterizes universal first-order transfer. The same geometry bounds the decision regret of learnability-based selection, admits a sharper candidate-difference refinement, and reveals an approximation--calibration trade-off across trainable subspaces. Controlled gridworld and LLM experiments are consistent with the predicted Taylor scaling, failures under misalignment, degradation of candidate rankings, and update-space tension. Extending these local diagnostics into distributionally valid training-time signals under finite probes, adaptive selection, and on-policy shift is the central next step. More broadly, whether surrogate progress becomes decision progress is determined not by the surrogate alone, but by its alignment with the decision objective in the directions the model can actually move.

{\small
\bibliographystyle{plainnat}
\bibliography{reference}
}

\appendix
\normalsize
\section{Proofs}

\subsection{Proof of Theorem~\ref{thm:transfer}}
\label{app:proof-transfer}

\begin{proof}
Let $F:\R^p\to\R$ have a $\beta$-Lipschitz gradient on the relevant update segment, and set
\[
    \delta_\tau
    =
    \theta_\tau^+-\theta
    =
    -\eta g_\tau.
\]
To derive the first-order remainder bound, define the scalar function
\[
    \phi(t)=F(\theta+t\delta_\tau),
    \qquad t\in[0,1].
\]
By the chain rule,
\[
    \phi'(t)
    =
    \ip{\nabla F(\theta+t\delta_\tau)}{\delta_\tau}.
\]
The fundamental theorem of calculus therefore gives
\[
\begin{aligned}
    F(\theta+\delta_\tau)-F(\theta)
    &=
    \phi(1)-\phi(0)\\
    &=
    \int_0^1
    \ip{\nabla F(\theta+t\delta_\tau)}{\delta_\tau}
    \,dt.
\end{aligned}
\]
Subtracting the first-order term
$\ip{\nabla F(\theta)}{\delta_\tau}$ gives
\begin{align}
&
F(\theta+\delta_\tau)-F(\theta)
-\ip{\nabla F(\theta)}{\delta_\tau}
\nonumber\\
&\quad=
\int_0^1
\ip{\nabla F(\theta+t\delta_\tau)}{\delta_\tau}
\,dt
-
\ip{\nabla F(\theta)}{\delta_\tau}
\nonumber\\
&\quad=
\int_0^1
\ip{\nabla F(\theta+t\delta_\tau)}{\delta_\tau}
\,dt
-
\int_0^1
\ip{\nabla F(\theta)}{\delta_\tau}
\,dt
\nonumber\\
&\quad=
\int_0^1
\left[
\ip{\nabla F(\theta+t\delta_\tau)}{\delta_\tau}
-
\ip{\nabla F(\theta)}{\delta_\tau}
\right]
\,dt
\nonumber\\
&\quad=
\int_0^1
\ip{
\nabla F(\theta+t\delta_\tau)-\nabla F(\theta)
}{
\delta_\tau
}
\,dt.
\label{eq:integral-taylor-remainder}
\end{align}

Taking absolute values, applying the triangle inequality for
integrals, and then Cauchy--Schwarz gives
\[
\begin{aligned}
&
\left|
F(\theta+\delta_\tau)-F(\theta)
-\ip{\nabla F(\theta)}{\delta_\tau}
\right|
\\
&\quad=
\left|
\int_0^1
\ip{
\nabla F(\theta+t\delta_\tau)-\nabla F(\theta)
}{
\delta_\tau
}
\,dt
\right|
\\
&\quad\leq
\int_0^1
\left|
\ip{
\nabla F(\theta+t\delta_\tau)-\nabla F(\theta)
}{
\delta_\tau
}
\right|
\,dt
\\
&\quad\leq
\int_0^1
\norm{
\nabla F(\theta+t\delta_\tau)-\nabla F(\theta)
}
\norm{\delta_\tau}
\,dt.
\end{aligned}
\]
Because $\nabla F$ is $\beta$-Lipschitz,
\[
\norm{
\nabla F(\theta+t\delta_\tau)-\nabla F(\theta)
}
\leq
\beta t\norm{\delta_\tau}.
\]
Substituting this inequality into the preceding display gives
\[
\begin{aligned}
&
\left|
F(\theta+\delta_\tau)-F(\theta)
-\ip{\nabla F(\theta)}{\delta_\tau}
\right|\\
&\qquad\leq
\int_0^1
\beta t\norm{\delta_\tau}^2\,dt
=
\frac{\beta}{2}\norm{\delta_\tau}^2.
\end{aligned}
\]
Hence,
\begin{equation}
\left|
F(\theta+\delta_\tau)-F(\theta)
-\ip{\nabla F(\theta)}{\delta_\tau}
\right|
\leq
\frac{\beta}{2}\norm{\delta_\tau}^2.
\label{eq:generic-taylor}
\end{equation}

Since $\theta_\tau^+=\theta+\delta_\tau$, multiplying the expression inside the absolute value in
\eqref{eq:generic-taylor} by $-1$ gives
\[
\left|
\bigl[F(\theta)-F(\theta_\tau^+)\bigr]
+
\ip{\nabla F(\theta)}{\delta_\tau}
\right|
\leq
\frac{\beta}{2}\norm{\delta_\tau}^2.
\]
Now substitute $\delta_\tau=-\eta g_\tau$. Because
\[
\ip{\nabla F(\theta)}{\delta_\tau}
=
-\eta\ip{\nabla F(\theta)}{g_\tau}
\]
and
\[
\norm{\delta_\tau}^2
=
\eta^2\norm{g_\tau}^2,
\]
we obtain
\begin{equation}
\left|
\bigl[F(\theta)-F(\theta_\tau^+)\bigr]
-
\eta\ip{\nabla F(\theta)}{g_\tau}
\right|
\leq
\frac{\beta\eta^2}{2}\norm{g_\tau}^2.
\label{eq:generic-improvement}
\end{equation}

Because $g_\tau\in\cU$ and $\Proj$ is the orthogonal projector onto $\cU$, the component of $\nabla F(\theta)$ orthogonal to $\cU$ has zero inner product with $g_\tau$. Therefore,
\[
\begin{aligned}
\ip{\nabla F(\theta)}{g_\tau}
&=
\ip{
\Proj\nabla F(\theta)
+
(I-\Proj)\nabla F(\theta)
}{
g_\tau
}\\
&=
\ip{\Proj\nabla F(\theta)}{g_\tau}.
\end{aligned}
\]
Applying \eqref{eq:generic-improvement} first to
$F=\cL$ with $\beta=\beta_{\cL}$, and then to
$F=\cD$ with $\beta=\beta_{\cD}$, gives
\eqref{eq:surrogate-taylor} and
\eqref{eq:decision-taylor}.

Define the corresponding approximation errors by
\[
    e_{\cL}
    =
    \Lambda_\tau
    -
    \eta\ip{g_{\cL}}{g_\tau},
    \qquad
    e_{\cD}
    =
    U_\tau
    -
    \eta\ip{g_{\cD}}{g_\tau}.
\]
Equations~\eqref{eq:surrogate-taylor} and
\eqref{eq:decision-taylor} imply
\[
    |e_{\cL}|
    \leq
    \frac{\beta_{\cL}\eta^2}{2}\norm{g_\tau}^2,
    \qquad
    |e_{\cD}|
    \leq
    \frac{\beta_{\cD}\eta^2}{2}\norm{g_\tau}^2.
\]
For any $c\geq0$,
\[
\begin{aligned}
U_\tau-c\Lambda_\tau
&=
\eta\ip{g_{\cD}}{g_\tau}
+e_{\cD}
-c\left(
\eta\ip{g_{\cL}}{g_\tau}
+e_{\cL}
\right)\\
&=
\eta\ip{g_{\cD}-c g_{\cL}}{g_\tau}
+e_{\cD}-c e_{\cL}.
\end{aligned}
\]
Taking absolute values and applying the triangle inequality and
Cauchy--Schwarz yields
\[
\begin{aligned}
|U_\tau-c\Lambda_\tau|
&\leq
\eta
\norm{g_{\cD}-c g_{\cL}}
\norm{g_\tau}
+
|e_{\cD}|+c|e_{\cL}|\\
&\leq
\eta
\norm{g_{\cD}-c g_{\cL}}
\norm{g_\tau}\\
&\qquad+
\frac{\eta^2}{2}
\bigl(
\beta_{\cD}+c\beta_{\cL}
\bigr)
\norm{g_\tau}^2,
\end{aligned}
\]
which is exactly \eqref{eq:transfer-main}.
\end{proof}

\subsection{Positive and exact transfer consequences}
\label{app:positive-exact-transfer}

Substituting $c_{\cU}^*(\theta)$ into \eqref{eq:transfer-main} and using
\eqref{eq:calibration-gap-at-optimum} gives the explicit lower bound
\begin{equation}
\begin{aligned}
    U_\tau(\theta)
    \geq\;&
    c_{\cU}^*(\theta)\Lambda_\tau(\theta)
    -
    \eta\kappa_{\cU}(\theta)\norm{g_\tau}\\
    &-
    \frac{\eta^2}{2}
    \bigl(
    \beta_{\cD}
    +
    c_{\cU}^*(\theta)\beta_{\cL}
    \bigr)
    \norm{g_\tau}^2.
\end{aligned}
\label{eq:one-step-positive-transfer}
\end{equation}
Therefore, $U_\tau(\theta)>0$ whenever the right-hand side of
\eqref{eq:one-step-positive-transfer} is positive.

For the exact case, suppose that for some $c>0$ and $b\in\R$,
\begin{equation}
    \cD(\vartheta)
    =
    c\cL(\vartheta)+b
    \qquad
    \text{for every }\vartheta\in[\theta,\theta_\tau^+].
    \label{eq:exact-objective-alignment}
\end{equation}
Equivalently, $J(\vartheta)=-c\cL(\vartheta)-b$ on the update segment. Direct substitution gives
\begin{equation}
\begin{aligned}
    U_\tau(\theta)
    &=
    \cD(\theta)-\cD(\theta_\tau^+)\\
    &=
    c\bigl[\cL(\theta)-\cL(\theta_\tau^+)\bigr]\\
    &=
    c\Lambda_\tau(\theta).
\end{aligned}
\label{eq:exact-transfer}
\end{equation}
Thus the actual finite-step transfer discrepancy vanishes in this special case, even though the generic curvature bound need not be tight. This is the exact-alignment case summarized in Remark~\ref{rem:positive-exact-transfer}.

\subsection{Extension to optimizer-generated updates}
\label{app:proof-optimizer-update}

Fix the current optimizer state, and let an arbitrary update rule applied to trajectory $\tau$ produce
\[
    \widetilde{\theta}_\tau^+
    =
    \theta+\widetilde{\Delta}_\tau,
    \qquad
    \widetilde{\Delta}_\tau\in\cU.
\]
Define
\[
    \widetilde{\Lambda}_\tau
    =
    \cL(\theta)-\cL(\widetilde{\theta}_\tau^+),
    \qquad
    \widetilde{U}_\tau
    =
    \cD(\theta)-\cD(\widetilde{\theta}_\tau^+).
\]
If $\cL$ and $\cD$ have $\beta_{\cL}$- and $\beta_{\cD}$-Lipschitz gradients on the segment $[\theta,\widetilde{\theta}_\tau^+]$, then, for every $c\geq0$,
\begin{equation}
\left|
\widetilde{U}_\tau-c\widetilde{\Lambda}_\tau
\right|
\leq
\norm{g_{\cD}-c g_{\cL}}
\norm{\widetilde{\Delta}_\tau}
+
\frac{1}{2}
\bigl(\beta_{\cD}+c\beta_{\cL}\bigr)
\norm{\widetilde{\Delta}_\tau}^2.
\label{eq:optimizer-transfer}
\end{equation}
Thus, momentum \citep{polyak1964some}, Adam \citep{kingma2015adam}, AdamW \citep{loshchilov2019decoupled}, and matrix-orthogonalized Muon \citep{jordan2024muon,liu2025muon} enter the same local comparison through their realized displacement, provided that candidate updates are evaluated from the same optimizer state and remain in $\cU$. Because Theorem~\ref{thm:iff} quantifies all directions in $\cU$, its universal positive-collinearity characterization is unchanged.

\begin{proof}
For $F\in\{\cL,\cD\}$, apply \eqref{eq:generic-taylor} with $\delta_\tau=\widetilde{\Delta}_\tau$. Since $\widetilde{\Delta}_\tau\in\cU$,
\[
\ip{\nabla F(\theta)}{\widetilde{\Delta}_\tau}
=
\ip{\Proj\nabla F(\theta)}{\widetilde{\Delta}_\tau}.
\]
Consequently,
\[
\left|
\widetilde{\Lambda}_\tau
+
\ip{g_{\cL}}{\widetilde{\Delta}_\tau}
\right|
\leq
\frac{\beta_{\cL}}{2}\norm{\widetilde{\Delta}_\tau}^2,
\qquad
\left|
\widetilde{U}_\tau
+
\ip{g_{\cD}}{\widetilde{\Delta}_\tau}
\right|
\leq
\frac{\beta_{\cD}}{2}\norm{\widetilde{\Delta}_\tau}^2.
\]
Subtracting $c$ times the first relation from the second, followed by the triangle inequality and Cauchy--Schwarz, proves \eqref{eq:optimizer-transfer}. The argument uses only the realized displacement and therefore does not require differentiability of the update rule itself.
\end{proof}

\subsection{Proof of Corollary~\ref{cor:cumulative-transfer}}
\label{app:proof-cumulative}

\begin{proof}
Apply Theorem~\ref{thm:transfer} at each iterate $\theta_t$ with
trajectory $\tau_t$ and step size $\eta_t$. Define the corresponding
one-step Taylor errors by
\[
    e_{\cL,t}
    =
    \Lambda_t
    -
    \eta_t\ip{g_{\cL,t}}{g_t},
    \qquad
    e_{\cD,t}
    =
    U_t
    -
    \eta_t\ip{g_{\cD,t}}{g_t}.
\]
Theorem~\ref{thm:transfer} gives
\[
    |e_{\cL,t}|
    \leq
    \frac{\beta_{\cL}\eta_t^2}{2}\norm{g_t}^2,
    \qquad
    |e_{\cD,t}|
    \leq
    \frac{\beta_{\cD}\eta_t^2}{2}\norm{g_t}^2.
\]
Equivalently,
\[
    \Lambda_t
    =
    \eta_t\ip{g_{\cL,t}}{g_t}
    +
    e_{\cL,t},
    \qquad
    U_t
    =
    \eta_t\ip{g_{\cD,t}}{g_t}
    +
    e_{\cD,t}.
\]
Therefore, for any fixed $c\geq0$,
\begin{align}
    U_t-c\Lambda_t
    &=
    \eta_t\ip{g_{\cD,t}}{g_t}
    +e_{\cD,t}
    -
    c\left(
    \eta_t\ip{g_{\cL,t}}{g_t}
    +e_{\cL,t}
    \right)
    \nonumber\\
    &=
    \eta_t
    \ip{g_{\cD,t}-c g_{\cL,t}}{g_t}
    +
    e_{\cD,t}-c e_{\cL,t}.
    \label{eq:pathwise-one-step-decomposition}
\end{align}

Summing \eqref{eq:pathwise-one-step-decomposition} over
$t=0,\ldots,T-1$ gives
\begin{align}
\sum_{t=0}^{T-1}
\bigl(U_t-c\Lambda_t\bigr)
&=
\sum_{t=0}^{T-1}
\eta_t
\ip{g_{\cD,t}-c g_{\cL,t}}{g_t}
\nonumber\\
&\quad+
\sum_{t=0}^{T-1}
\bigl(e_{\cD,t}-c e_{\cL,t}\bigr).
\label{eq:pathwise-summed-decomposition}
\end{align}
Because the same population objectives $\cL$ and $\cD$ are evaluated
along one realized path, both sequences telescope:
\[
\begin{aligned}
\sum_{t=0}^{T-1}U_t
&=
\sum_{t=0}^{T-1}
\bigl[
\cD(\theta_t)-\cD(\theta_{t+1})
\bigr]\\
&=
\cD(\theta_0)-\cD(\theta_T),
\end{aligned}
\]
and similarly,
\[
\sum_{t=0}^{T-1}\Lambda_t
=
\cL(\theta_0)-\cL(\theta_T).
\]
Substituting these identities into
\eqref{eq:pathwise-summed-decomposition} yields
\begin{align}
&
\bigl[\cD(\theta_0)-\cD(\theta_T)\bigr]
-
c\bigl[\cL(\theta_0)-\cL(\theta_T)\bigr]
\nonumber\\
&\quad=
\sum_{t=0}^{T-1}
\eta_t
\ip{g_{\cD,t}-c g_{\cL,t}}{g_t}
+
\sum_{t=0}^{T-1}
\bigl(e_{\cD,t}-c e_{\cL,t}\bigr).
\label{eq:pathwise-telescoped-decomposition}
\end{align}

Taking absolute values and applying the triangle inequality gives
\begin{align}
&
\left|
\bigl[\cD(\theta_0)-\cD(\theta_T)\bigr]
-
c\bigl[\cL(\theta_0)-\cL(\theta_T)\bigr]
\right|
\nonumber\\
&\quad\leq
\left|
\sum_{t=0}^{T-1}
\eta_t
\ip{g_{\cD,t}-c g_{\cL,t}}{g_t}
\right|
+
\sum_{t=0}^{T-1}
\left|
e_{\cD,t}-c e_{\cL,t}
\right|.
\label{eq:pathwise-triangle}
\end{align}
Since $c\geq0$,
\[
\begin{aligned}
\left|
e_{\cD,t}-c e_{\cL,t}
\right|
&\leq
|e_{\cD,t}|+c|e_{\cL,t}|\\
&\leq
\frac{\eta_t^2}{2}
\bigl(
\beta_{\cD}+c\beta_{\cL}
\bigr)
\norm{g_t}^2.
\end{aligned}
\]
Substituting this bound into \eqref{eq:pathwise-triangle} gives the
first inequality in \eqref{eq:cumulative-transfer}.

For the remaining first-order term, the triangle inequality for sums
and Cauchy--Schwarz give
\begin{align}
\left|
\sum_{t=0}^{T-1}
\eta_t
\ip{g_{\cD,t}-c g_{\cL,t}}{g_t}
\right|
&\leq
\sum_{t=0}^{T-1}
\eta_t
\left|
\ip{g_{\cD,t}-c g_{\cL,t}}{g_t}
\right|
\nonumber\\
&\leq
\sum_{t=0}^{T-1}
\eta_t
\norm{g_{\cD,t}-c g_{\cL,t}}
\norm{g_t}.
\end{align}
Combining the preceding inequalities proves both inequalities in
\eqref{eq:cumulative-transfer}.

Let $B_T(c)$ denote the final right-hand side of
\eqref{eq:cumulative-transfer}. The resulting absolute-value bound is
equivalent to
\[
-B_T(c)
\leq
\bigl[\cD(\theta_0)-\cD(\theta_T)\bigr]
-
c\bigl[\cL(\theta_0)-\cL(\theta_T)\bigr]
\leq
B_T(c).
\]
Using its lower side gives
\[
\cD(\theta_0)-\cD(\theta_T)
\geq
c\bigl[\cL(\theta_0)-\cL(\theta_T)\bigr]
-
B_T(c).
\]
Rearranging for the terminal decision risk yields
\[
\cD(\theta_T)
\leq
\cD(\theta_0)
-
c\bigl[\cL(\theta_0)-\cL(\theta_T)\bigr]
+
B_T(c).
\]
Subtracting the constant $\cD_{\cU}^*$ from both sides gives
\[
\cD(\theta_T)-\cD_{\cU}^*
\leq
\cD(\theta_0)-\cD_{\cU}^*
-
c\bigl[\cL(\theta_0)-\cL(\theta_T)\bigr]
+
B_T(c),
\]
which is \eqref{eq:cumulative-excess-risk}.

Finally, if
\[
c\bigl[\cL(\theta_0)-\cL(\theta_T)\bigr]
>
B_T(c),
\]
then
\[
\cD(\theta_T)
<
\cD(\theta_0).
\]
Hence the cumulative surrogate improvement guarantees a strict
cumulative improvement in decision risk under the stated condition.
\end{proof}

\subsection{Local calibration scales along a path}
\label{app:local-calibration-scales}

The common $c$ in Corollary~\ref{cor:cumulative-transfer} is needed for the unweighted surrogate reductions to telescope. If instead
\[
c_t^*
\in
\arg\min_{c\geq0}
\norm{g_{\cD,t}-c g_{\cL,t}}
\]
is chosen separately at each iterate, applying Theorem~\ref{thm:transfer} stepwise gives
\begin{align}
&
\left|
\cD(\theta_0)-\cD(\theta_T)
-
\sum_{t=0}^{T-1}c_t^*\Lambda_t
\right|
\nonumber\\
&\quad\leq
\sum_{t=0}^{T-1}
\eta_t\kappa_{\cU}(\theta_t)\norm{g_t}
+
\frac{1}{2}
\sum_{t=0}^{T-1}
\eta_t^2
\bigl(\beta_{\cD}+c_t^*\beta_{\cL}\bigr)
\norm{g_t}^2.
\label{eq:cumulative-local-scale}
\end{align}
This bounds a weighted cumulative surrogate reduction rather than a fixed multiple of $\cL(\theta_0)-\cL(\theta_T)$. Both forms permit adaptive selection but compare objectives along the same realized path; comparing terminal points reached by different selectors requires additional stability assumptions.

\subsection{Proof of Theorem~\ref{thm:iff}}
\label{app:proof-iff}

\begin{proof}
Statement (ii) immediately implies (i). Conversely, decompose $g_{\cD}=c g_{\cL}+b$ with $b\perp g_{\cL}$. If $b\neq0$, then for $v=g_{\cL}-\lambda b$ and sufficiently large $\lambda$, one has $\ip{g_{\cL}}{v}>0$ but $\ip{g_{\cD}}{v}<0$, contradicting (i). Hence $b=0$. Taking $v=g_{\cL}$ then forces $c>0$.
\end{proof}

\subsection{Proof of Corollary~\ref{cor:selection}}
\label{app:proof-selection}

\begin{proof}
Set $c^*=c_{\cU}^*$. By Theorem~\ref{thm:transfer}, every
$\tau\in\cT$ satisfies
\[
|U_\tau-c^*\Lambda_\tau|
\leq
\eta\kappa_{\cU}(\theta)\norm{g_\tau}
+
\frac{\eta^2}{2}
\bigl(
\beta_{\cD}+c^*\beta_{\cL}
\bigr)
\norm{g_\tau}^2.
\]
Since $\norm{g_\tau}\leq G$ uniformly over $\cT$, define
\[
\delta
=
\eta\kappa_{\cU}(\theta)G
+
\frac{\eta^2}{2}
\bigl(
\beta_{\cD}+c^*\beta_{\cL}
\bigr)G^2.
\]
Then, for every candidate $\tau\in\cT$,
\[
c^*\Lambda_\tau-\delta
\leq
U_\tau
\leq
c^*\Lambda_\tau+\delta.
\]

Applying the upper inequality to the decision-utility maximizer
$\tau_{\cD}$ gives
\[
U_{\tau_{\cD}}
\leq
c^*\Lambda_{\tau_{\cD}}+\delta.
\]
By definition, $\tau_{\cL}$ maximizes learnability over $\cT$, so
\[
\Lambda_{\tau_{\cD}}
\leq
\Lambda_{\tau_{\cL}}.
\]
Because $c^*\geq0$, multiplying this inequality by $c^*$ preserves
its direction:
\[
c^*\Lambda_{\tau_{\cD}}
\leq
c^*\Lambda_{\tau_{\cL}}.
\]
Hence,
\[
U_{\tau_{\cD}}
\leq
c^*\Lambda_{\tau_{\cL}}+\delta.
\]

Applying the lower uniform inequality to $\tau_{\cL}$ gives
\[
U_{\tau_{\cL}}
\geq
c^*\Lambda_{\tau_{\cL}}-\delta,
\]
or equivalently,
\[
c^*\Lambda_{\tau_{\cL}}
\leq
U_{\tau_{\cL}}+\delta.
\]
Combining the preceding inequalities yields
\[
U_{\tau_{\cD}}
\leq
U_{\tau_{\cL}}+2\delta.
\]
Finally, because $\tau_{\cD}$ maximizes decision utility,
\[
U_{\tau_{\cD}}-U_{\tau_{\cL}}\geq0.
\]
Therefore,
\[
\begin{aligned}
0
&\leq
U_{\tau_{\cD}}-U_{\tau_{\cL}}\\
&\leq
2\delta\\
&=
2\eta\kappa_{\cU}(\theta)G
+
\eta^2
\bigl(
\beta_{\cD}+c_{\cU}^*\beta_{\cL}
\bigr)G^2,
\end{aligned}
\]
which proves \eqref{eq:selection-bound}.
\end{proof}

\subsection{Proof of Corollary~\ref{cor:candidate-difference-selection}}
\label{app:proof-selection-difference}

\begin{proof}
For $c\geq0$ and each candidate $\tau\in\cT$, write
\[
    r_c=g_{\cD}-c g_{\cL},
    \qquad
    U_\tau-c\Lambda_\tau
    =
    \eta\ip{r_c}{g_\tau}+e_\tau(c).
\]
The two Taylor bounds in Theorem~\ref{thm:transfer} imply
\begin{equation}
    |e_\tau(c)|
    \leq
    \frac{\eta^2}{2}
    \bigl(\beta_{\cD}+c\beta_{\cL}\bigr)
    \norm{g_\tau}^2.
    \label{eq:candidate-error-bound}
\end{equation}
Subtracting the representations for any $\tau,\sigma\in\cT$ gives
\begin{align}
&
\bigl(U_\tau-U_\sigma\bigr)
-
c\bigl(\Lambda_\tau-\Lambda_\sigma\bigr)
\nonumber\\
&\qquad
=
\eta\ip{r_c}{g_\tau-g_\sigma}
+
e_\tau(c)-e_\sigma(c).
\label{eq:candidate-pairwise-decomposition}
\end{align}
Because $g_\tau-g_\sigma\in\mathcal V_{\cT}$, its inner product with
the component of $r_c$ orthogonal to $\mathcal V_{\cT}$ is zero. Hence,
\[
    \ip{r_c}{g_\tau-g_\sigma}
    =
    \ip{
    P_{\mathcal V_{\cT}}r_c
    }{
    g_\tau-g_\sigma
    }.
\]

Set $\tau=\tau_{\cD}$ and $\sigma=\tau_{\cL}$ in
\eqref{eq:candidate-pairwise-decomposition}. Since $\tau_{\cL}$
maximizes learnability and $c\geq0$,
\[
    c\bigl(
    \Lambda_{\tau_{\cD}}-\Lambda_{\tau_{\cL}}
    \bigr)
    \leq0.
\]
Using \eqref{eq:candidate-error-bound}, the definition of
$\Gamma_{\cT}(c)$, and $\norm{g_\tau}\leq G$ therefore gives
\[
\begin{aligned}
U_{\tau_{\cD}}-U_{\tau_{\cL}}
&\leq
\eta\Gamma_{\cT}(c)
+
|e_{\tau_{\cD}}(c)|
+
|e_{\tau_{\cL}}(c)|\\
&\leq
\eta\Gamma_{\cT}(c)
+
\eta^2
\bigl(\beta_{\cD}+c\beta_{\cL}\bigr)G^2.
\end{aligned}
\]
This holds for every $c\geq0$, so taking the infimum proves the first
upper bound in \eqref{eq:candidate-difference-selection-bound}.

To compare it with the full-space bound, evaluate the refined expression
at $c=c_{\cU}^*$. By Cauchy--Schwarz, contraction of orthogonal projection,
and $\norm{g_\tau-g_\sigma}\leq2G$,
\[
\begin{aligned}
\Gamma_{\cT}(c_{\cU}^*)
&\leq
\norm{
P_{\mathcal V_{\cT}}
\bigl(g_{\cD}-c_{\cU}^*g_{\cL}\bigr)
}
\max_{\tau,\sigma\in\cT}
\norm{g_\tau-g_\sigma}\\
&\leq
2G
\norm{g_{\cD}-c_{\cU}^*g_{\cL}}
=
2G\kappa_{\cU}(\theta).
\end{aligned}
\]
Substitution gives the final inequality in
\eqref{eq:candidate-difference-selection-bound}; nonnegativity follows
from the definition of $\tau_{\cD}$.

Finally, fix $\sigma\neq\tau_{\cL}$ and rearrange
\eqref{eq:candidate-pairwise-decomposition} with
$\tau=\tau_{\cL}$. Equations
\eqref{eq:candidate-error-bound} and
\eqref{eq:candidate-margin-certificate} imply
\[
    U_{\tau_{\cL}}-U_\sigma>0.
\]
If the condition holds for every such $\sigma$, then
$\tau_{\cL}$ uniquely maximizes decision utility over $\cT$.
\end{proof}

\subsection{Proof of Proposition~\ref{prop:subspace}}
\label{app:proof-subspace}

\begin{proof}
We prove the two inequalities separately.

First, because $\cU_1\subseteq\cU_2$ and the two accessible
parameter classes have the same base point $\theta_0$,
\[
\Theta_{\cU_1}
=
\theta_0+\cU_1
\subseteq
\theta_0+\cU_2
=
\Theta_{\cU_2}.
\]
Minimizing the same decision risk over the larger feasible set cannot
produce a larger optimal value. Therefore,
\[
\cD_{\cU_1}^*
=
\inf_{\vartheta\in\Theta_{\cU_1}}\cD(\vartheta)
\geq
\inf_{\vartheta\in\Theta_{\cU_2}}\cD(\vartheta)
=
\cD_{\cU_2}^*.
\]
Subtracting the unrestricted optimum $\cD^*$ from both sides gives
\[
\begin{aligned}
\mathcal A_{\cD}(\cU_1)
&=
\cD_{\cU_1}^*-\cD^*\\
&\geq
\cD_{\cU_2}^*-\cD^*\\
&=
\mathcal A_{\cD}(\cU_2).
\end{aligned}
\]
This proves the approximation-error inequality.

We next compare the calibration gaps. For any $z\in\R^p$, the
nesting $\cU_1\subseteq\cU_2$ gives the orthogonal decomposition
\[
P_{\cU_2}z
=
P_{\cU_1}z
+
\bigl(P_{\cU_2}-P_{\cU_1}\bigr)z.
\]
Here,
\[
P_{\cU_1}z\in\cU_1
\]
by the definition of orthogonal projection. Moreover,
$P_{\cU_2}z\in\cU_2$ and
$P_{\cU_1}z\in\cU_1\subseteq\cU_2$, so
\[
\bigl(P_{\cU_2}-P_{\cU_1}\bigr)z
\in\cU_2.
\]
To show that this difference is also orthogonal to $\cU_1$, take any
$u\in\cU_1$. Since $\cU_1\subseteq\cU_2$, the defining property of
orthogonal projections gives
\[
\ip{P_{\cU_2}z}{u}
=
\ip{z}{u},
\qquad
\ip{P_{\cU_1}z}{u}
=
\ip{z}{u}.
\]
Therefore,
\[
\begin{aligned}
\ip{
\bigl(P_{\cU_2}-P_{\cU_1}\bigr)z
}{u}
&=
\ip{P_{\cU_2}z}{u}
-
\ip{P_{\cU_1}z}{u}\\
&=0.
\end{aligned}
\]
Because this holds for every $u\in\cU_1$,
\[
\bigl(P_{\cU_2}-P_{\cU_1}\bigr)z
\in
\cU_1^\perp.
\]
Combining the two inclusions yields
\[
\bigl(P_{\cU_2}-P_{\cU_1}\bigr)z
\in
\cU_2\cap\cU_1^\perp.
\]
Thus, the two terms in
\[
P_{\cU_2}z
=
P_{\cU_1}z
+
\bigl(P_{\cU_2}-P_{\cU_1}\bigr)z
\]
are orthogonal. By the Pythagorean identity,
\[
\norm{P_{\cU_2}z}^2
=
\norm{P_{\cU_1}z}^2
+
\norm{
\bigl(P_{\cU_2}-P_{\cU_1}\bigr)z
}^2.
\]
Consequently,
\[
\norm{P_{\cU_1}z}
\leq
\norm{P_{\cU_2}z}.
\]

For each $c\geq0$, set
\[
z(c)
=
\nabla\cD(\theta)-c\nabla\cL(\theta).
\]
Applying the preceding projection inequality to $z(c)$ gives,
pointwise for every $c\geq0$,
\[
\norm{
P_{\cU_1}
\bigl(
\nabla\cD(\theta)-c\nabla\cL(\theta)
\bigr)
}
\leq
\norm{
P_{\cU_2}
\bigl(
\nabla\cD(\theta)-c\nabla\cL(\theta)
\bigr)
}.
\]
Taking the infimum over the same set of scales $c\geq0$ on both
sides preserves the inequality:
\[
\begin{aligned}
\kappa_{\cU_1}(\theta)
&=
\inf_{c\geq0}
\norm{
P_{\cU_1}
\bigl(
\nabla\cD(\theta)-c\nabla\cL(\theta)
\bigr)
}\\
&\leq
\inf_{c\geq0}
\norm{
P_{\cU_2}
\bigl(
\nabla\cD(\theta)-c\nabla\cL(\theta)
\bigr)
}\\
&=
\kappa_{\cU_2}(\theta).
\end{aligned}
\]
This proves the calibration-gap inequality and completes the proof.
\end{proof}

\section{Experimental Details}
\label{app:experimental-details}

\paragraph{Basic measurement protocol.}
Candidate, surrogate-probe, and decision-probe sets are disjoint, and the fixed probe averages satisfy
\[
    g_F=\Proj\nabla F(\theta),
    \qquad
    F\in\{\cL,\cD\}.
\]
Analyzed updates use projected SGD without momentum, weight decay, clipping, or optimizer state. Dropout is disabled and gradient quantities use FP32. LLM runs additionally disable TF32, hold the LoRA mask fixed within each comparison, and form $\Lambda_\tau$ and $U_\tau$ from paired per-example FP64 risk differences. Candidate pools contain 64 synthetic or 32 LLM trajectories.

All experiments were conducted on a single NVIDIA RTX A6000 GPU with 48\,GB of memory.

\subsection{Evaluation Protocols}

Both settings evaluate one-step transfer, pathwise accumulation, a miscalibration witness, trajectory selection, and nested update spaces.

\paragraph{One-step transfer.}
At a common checkpoint, each candidate induces one projected update for each step size in the sweep. We measure the Taylor residual
\[
R_F(\eta,\tau)
=
\left|
F(\theta)-F(\theta-\eta g_\tau)
-\eta\ip{g_F}{g_\tau}
\right|,
\qquad F\in\{\cL,\cD\}.
\]
The log--log slope of median $R_F$ against $\eta$ tests second-order scaling.

\paragraph{Cumulative transfer.}
\label{app:cumulative-protocol}
Starting from the same checkpoint, we apply a fixed candidate sequence for up to 50 updates and recompute both gradients at every iterate. For prefixes $T\in\{1,5,10,20,50\}$ and fixed $c=c_{\cU}^*(\theta_0)$, define the signed discrepancy
\begin{equation}
    E_T(c)
    =
    \bigl[\cD(\theta_0)-\cD(\theta_T)\bigr]
    -
    c\bigl[\cL(\theta_0)-\cL(\theta_T)\bigr]
    ,
    \qquad
    N_T(c)=|E_T(c)|,
    \label{eq:empirical-cumulative-residual}
\end{equation}
and the non-negative first-order scale
\begin{equation}
    A_T(c)
    =
    \sum_{t=0}^{T-1}
    \eta_t
    \norm{g_{\cD,t}-c g_{\cL,t}}
    \norm{g_t}.
    \label{eq:empirical-first-order-scale}
\end{equation}
For the LLM, we also compute the signed first-order sum
\begin{equation}
    S_T(c)
    =
    \sum_{t=0}^{T-1}
    \eta_t
    \ip{g_{\cD,t}-c g_{\cL,t}}{g_t},
    \label{eq:signed-first-order-sum}
\end{equation}
and the accumulated remainder $Q_T(c)=|E_T(c)-S_T(c)|$. Unlike $S_T$, $A_T$ does not permit cancellation. Synthetic RL estimates a plug-in curvature envelope; the LLM quantities $N_T/A_T$, $Q_T$, and $Q_T/|S_T|$ are descriptive.

\paragraph{Miscalibration witness.}
We construct an accessible direction with positive first-order surrogate improvement and negative first-order decision utility, then test whether the sign pattern persists after a finite update. Let
\[
b
=
g_{\cD}
-
\frac{\ip{g_{\cD}}{g_{\cL}}}{\norm{g_{\cL}}^2}g_{\cL}.
\]
When $b\neq0$, we use
\[
v_\lambda=g_{\cL}-\lambda b,
\qquad
\lambda>\frac{\ip{g_{\cD}}{g_{\cL}}}{\norm{b}^2}.
\]
In both settings, we set
$\lambda=\ip{g_{\cD}}{g_{\cL}}/\norm{b}^{2}
+\norm{g_{\cL}}/\norm{b}$ and rescale $v_\lambda$ to the median
candidate-gradient norm before applying the finite update.

\paragraph{Trajectory selection.}
We vary the fraction of misaligned supervision $\rho$ while holding the checkpoint, candidate updates, and decision probe fixed. For a selector $s$ returning $\tau_s\in\cT$, define
\[
    R_{\mathrm{sel}}(s)
    =
    U_{\tau_{\cD}}-U_{\tau_s},
    \qquad
    G=\max_{\tau\in\cT}\norm{g_\tau}.
\]
We report selection regret, selected utility, top-one agreement with the decision-utility oracle, and Spearman$(\Lambda_\tau,U_\tau)$. The oracle is used only for evaluation.

For Table~\ref{tab:candidate-difference-selection}, define
\[
\begin{aligned}
B_{\mathrm{full}}
&=
2\eta\kappa_{\cU}G
+
\eta^2
\bigl(\beta_{\cD}+c_{\cU}^*\beta_{\cL}\bigr)G^2,\\
B_{\mathrm{diff}}
&=
\inf_{c\geq0}
\left\{
\eta\Gamma_{\cT}(c)
+
\eta^2
\bigl(\beta_{\cD}+c\beta_{\cL}\bigr)G^2
\right\},\\
B_{\mathrm{full}}^{\mathrm{FO}}
&=
2\eta\kappa_{\cU}G,
\qquad
B_{\mathrm{diff}}^{\mathrm{FO}}
=
\eta\inf_{c\geq0}\Gamma_{\cT}(c).
\end{aligned}
\]
Here, $B_{\mathrm{full}}$ is the right-hand side of
\eqref{eq:selection-bound}, equivalently the final right-hand side of
\eqref{eq:candidate-difference-selection-bound}, whereas
$B_{\mathrm{diff}}$ is the middle right-hand side of
\eqref{eq:candidate-difference-selection-bound}. The full-space quantity
uses $c=c_{\cU}^*$, while the candidate-difference quantity chooses
$c\geq0$ to minimize its displayed objective. For synthetic RL,
$\beta_{\cL}$ and $\beta_{\cD}$ are replaced by the corresponding plug-in
curvature estimates $\widehat\beta_{\cL}$ and $\widehat\beta_{\cD}$; the
LLM reports the FO variants, obtained by dropping the curvature terms.

For the candidate-difference diagnostic, candidates within $\epsilon_\Lambda=\max\{10^{-7},10^{-4}|\max_\sigma\Lambda_\sigma|\}$ of the maximum learnability form a numerical plateau. For this retrospective check, we use its decision-best member and report a tie-aware pass only if the decision oracle lies in the plateau and every comparison with a candidate outside it satisfies the relevant margin inequality up to slack tolerance $\epsilon_s=\max\{10^{-12},10^{-2}\eta^2\}$; the LLM check retains only first-order terms.

\paragraph{Update-space trade-off.}
From the same checkpoint, we compare nested accessible spaces. For each space, we measure $\kappa_{\cU}$, a finite-budget proxy for restricted attainable decision risk, and the terminal decision risk reached by surrogate training.

\subsection{Synthetic RL: Implementation and Measurements}

\paragraph{Implementation.}
The synthetic task is a fully observed $5\times5$, horizon-25 two-goal gridworld. A two-layer, 64-unit $\tanh$ policy with a four-action softmax head is pretrained by PPO for $8000$ environment steps. Each candidate is an episode from an optimal, suboptimal, or wrong-goal teacher, with behavioral-cloning loss averaged over its state--action pairs. Decision risk is the negative expected episodic return, differentiated exactly through finite-state occupancy recursion under the known dynamics. We use
\[
\eta\in\{10^{-4},3\times10^{-4},10^{-3},3\times10^{-3},10^{-2}\},
\qquad
\rho\in\{0,.25,.5,.75,1\},
\]
where $\rho$ is the fraction of wrong-goal supervision.
Unless stated otherwise, cumulative, witness, and selection analyses use $\eta=10^{-3}$ and $|\cU|=64$.

\paragraph{One-step and cumulative measurements.}
The one-step evaluation uses $R_F$. For cumulative transfer, projected Hessian--vector products and power iteration estimate directional smoothness at 101 equally spaced points per update segment, giving
\begin{equation}
    \widehat B_T(c)
    =
    A_T(c)
    +
    \frac{1}{2}
    \sum_{t=0}^{T-1}
    \eta_t^2
    \bigl(\widehat\beta_{\cD,t}
    +c\widehat\beta_{\cL,t}\bigr)
    \norm{g_t}^2,
    \label{eq:empirical-plugin-rhs}
\end{equation}
where $\widehat\beta_{F,t}$ is the largest projected spectral-norm estimate for objective $F$ over the sampled points of $[\theta_t,\theta_{t+1}]$. With $\epsilon=10^{-12}$, we report
\[
    \mathrm{ratio}^{\mathrm{syn}}_T
    =
    \frac{N_T(c)}{\widehat B_T(c)+\epsilon}.
\]
Because the grid is finite and the eigensolver approximate, $\widehat B_T(c)$ is diagnostic rather than certified.

\paragraph{Witness and selection measurements.}
When $b\neq0$, $v_\lambda$ improves the surrogate and worsens decision risk to first order; if $b=0$ and $g_{\cD}$ is a nonpositive multiple of $g_{\cL}$, we use $v=g_{\cL}$. Selection regret is normalized by
\[
    \widehat S_{\mathrm{sel}}
    =
    2\eta\kappa_{\cU}G
    +
    \eta^2
    \bigl(
    \widehat\beta_{\cD}
    +c_{\cU}^*\widehat\beta_{\cL}
    \bigr)G^2,
\]
the plug-in counterpart of \eqref{eq:selection-bound}.

\paragraph{Update-space measurements.}
At the common checkpoint $\theta_0$, we compute $\kappa_{\cU_k}$ over
\[
|\cU|\in\{4,16,64,256,\mathrm{full}\}.
\]
We estimate restricted attainable risk by direct decision-risk optimization over five restarts and record the final projected-gradient norm as a stationarity diagnostic.

\subsection{LLM Post-Training: Implementation and Measurements}

\paragraph{Data and objectives.}
Disjoint AQuA-RAT training subsets provide 10K warm-start examples, a 2K candidate reservoir, and separate 512-example surrogate and decision probes. For the update-space proxy, the decision probe is split into 256 optimization and 256 held-out examples. For question and choices $x$, gold rationale $r$, and correct option $y$, let $\bar r$ remove any trailing answer reveal from $r$, and let $\mathcal C(x,\bar r,y)$ index completion tokens in $\bar r$ followed by $y$. Masking the input $x$, the surrogate loss is
\begin{equation}
    \ell_{(x,\bar r,y)}(\theta)
    =
    -\frac{1}{|\mathcal C(x,\bar r,y)|}
    \sum_{j\in\mathcal C(x,\bar r,y)}
    \log p_\theta(w_j\mid x,w_{<j}).
    \label{eq:llm-surrogate-loss}
\end{equation}
This is mean completion NLL without rationale--answer reweighting.

For $a\in\{\mathrm A,\ldots,\mathrm E\}$, let $s_\theta(a\mid x,\bar r)$ be the option-label log-likelihood under the fixed question--choice--rationale context, and define
\begin{equation}
    q_\theta(a\mid x,\bar r)
    =
    \frac{\exp s_\theta(a\mid x,\bar r)}
    {\sum_{a'\in\{\mathrm A,\ldots,\mathrm E\}}
    \exp s_\theta(a'\mid x,\bar r)}.
    \label{eq:llm-label-probability}
\end{equation}
The empirical decision risk on probe $S_{\cD}$ is
\begin{equation}
    \widehat{\cD}_{S_{\cD}}(\theta)
    =
    -\frac{1}{|S_{\cD}|}
    \sum_{(x,\bar r,y)\in S_{\cD}}
    q_\theta(y\mid x,\bar r).
    \label{eq:llm-decision-risk}
\end{equation}
Five-way $\arg\max_a q_\theta(a\mid x,\bar r)$ accuracy is secondary; because it conditions on the gold rationale, it is not an end-to-end generation result.

\paragraph{Model and optimization.}
Each seed warm-starts for 25 AdamW steps (about 400 examples) at learning rate $2\times10^{-5}$, effective batch size 16 (microbatch 4 with four accumulation steps), zero weight decay, and sequence length 512, without decision-based stopping or selection. Rank-8 LoRA uses $\alpha=16$, zero dropout, and no bias on attention or feed-forward projections. Within a seed, all comparisons share $\theta_0$ and probes; seeds vary warm-start order and candidate/counterfactual sampling.

\paragraph{One-step measurements.}
The step-size sweep is
\[
\eta\in\{10^{-3},1.5\times10^{-3},2\times10^{-3},3\times10^{-3},5\times10^{-3},7\times10^{-3}\},
\]
a fixed grid chosen to clear the FP32 residual floor while remaining below the higher-order regime near $\eta\approx10^{-2}$. The development split selects the largest step satisfying
\[
\frac{
|\Lambda_{\mathrm{dev}}-\eta\,\lVert g_{\mathrm{dev}}\rVert^{2}|
}{
|\Lambda_{\mathrm{dev}}|
}
\leq0.1.
\]
where $g_{\mathrm{dev}}=\Proj\nabla\cL_{\mathrm{dev}}(\theta_0)$ and
$\Lambda_{\mathrm{dev}}
=\cL_{\mathrm{dev}}(\theta_0)
-\cL_{\mathrm{dev}}(\theta_0-\eta g_{\mathrm{dev}})$. This rule selects $\eta=7\times10^{-3}$ in all five seeds and fixes it thereafter. Decision probes and test accuracy are not used for tuning; the test split serves only the secondary accuracy report. Eight fixed candidates per seed fit the residual slope, while all 32 evaluate ranking at the selected step. A point enters the fit only when $R_F$ exceeds the repeated-evaluation numerical floor.

\paragraph{Cumulative measurements.}
Each seed follows one fixed 50-candidate path and reports its prefixes, recomputing both gradients at every iterate with $c=c_{\cU}^*(\theta_0)$ fixed. Because no LLM curvature envelope is estimated, we report
\begin{equation}
    \mathrm{ratio}^{\mathrm{llm}}_T
    =
    \frac{N_T(c)}{A_T(c)+\epsilon}
    \label{eq:llm-cumulative-ratio}
\end{equation}
using the non-negative scale $A_T(c)$ in \eqref{eq:empirical-first-order-scale}; Table~\ref{tab:llm-pathwise} also reports $Q_T(c)$ and $Q_T/|S_T(c)|$ from \eqref{eq:signed-first-order-sum}. All three are descriptive diagnostics.

\paragraph{Misalignment and selection measurements.}
For the controlled-misalignment, witness, and selection analyses,
\[
\rho\in\{0,0.25,0.5,0.75\}.
\]
Nested subsets of the same surrogate probe replace the gold terminal option with a uniformly sampled wrong option; questions, rationales, the candidate reservoir, and the gold decision probe remain fixed. Thus, $\rho$ changes only $\cL_\rho$, $g_{\cL_\rho}$, and candidate learnability.

Let
\[
    g_{\mathrm{ref}}
    =
    P_{\cU}\nabla\cL_0(\theta_0).
\]
The fixed-reference gradient-cosine baseline scores a candidate by
\begin{equation}
    s_{\mathrm{grad}}(\tau)
    =
    \frac{\ip{g_\tau}{g_{\mathrm{ref}}}}
    {\norm{g_\tau}\norm{g_{\mathrm{ref}}}+\epsilon}.
    \label{eq:llm-gradient-cosine}
\end{equation}
This is a one-step analogue of LESS, not its full pipeline, and its reference is fixed across $\rho$. Random selection uses the same candidates and budget; the oracle $\tau_{\cD}$ is used only to evaluate regret. LLM regret is normalized by $2\eta\kappa_{\cU}G+\epsilon$.

\paragraph{Witness measurements.}
At $\rho=0.75$ and $k=8$, where $b\neq0$, we choose
\[
\lambda>\frac{\ip{g_{\cD}}{g_{\cL}}}{\norm{b}^2},
\]
rescale $v_\lambda$ to the median candidate-gradient norm, and apply $-\eta v_\lambda$. We report its first-order signs and finite-step changes. The constructed direction illustrates Theorem~\ref{thm:iff} but need not coincide with an observed candidate update.

\paragraph{Update-space measurements.}
Masks expose the final $k\in\{1,2,4,8,24\}$ blocks of one adapter initialization. Each $k$ restarts from $\theta_0$ with the same trajectories, step size, update count, and budget. For this proxy, write $S_{\cD}=S_{\cD}^{\mathrm{opt}}\sqcup S_{\cD}^{\mathrm{ho}}$ for the fixed 256-example optimization and held-out halves. The surrogate path applies 20 projected-SGD updates; $\kappa_{\cU_k}(\theta_0)$ uses the fixed surrogate probe and $S_{\cD}^{\mathrm{ho}}$.

Direct-risk optimization takes 20 projected-SGD steps on $S_{\cD}^{\mathrm{opt}}$ at learning rate $10^{-4}$, then evaluates $\widetilde\theta_k$ once on $S_{\cD}^{\mathrm{ho}}$, which never selects an iterate. Along with the terminal projected-gradient norm, we report
\begin{equation}
    \widehat{\mathcal A}_{\cD}(\cU_k)
    =
    \widehat{\cD}_{S_{\cD}^{\mathrm{ho}}}(\widetilde\theta_k)
    -
    \widehat{\cD}_{S_{\cD}^{\mathrm{ho}}}
    (\widetilde\theta_{k_{\max}}),
    \qquad
    k_{\max}=24.
    \label{eq:llm-approximation-proxy}
\end{equation}
This finite-budget held-out proxy is zero at $k_{\max}$ by construction and differs from the population quantity $\mathcal A_{\cD}(\cU_k)$.

\subsection{LLM Cross-Fitting and Seed Aggregation}
\label{app:llm-crossfit}

For cross-fitting, we denote the fixed 256-example halves of each 512-example surrogate and decision probe by $A$ and $B$. Witness construction and selected-utility evaluation are run in both directions, using one fold for construction or selection and the other for finite-step evaluation, then averaged within seed.

\paragraph{Witness cross-fitting.}
At $\rho=\rho_{\max}=0.75$ and $k=8$, gradients on $(S_{\cL,A},S_{\cD,A})$ construct $v_\lambda$, whose finite-step $\Lambda(v)$ and $U(v)$ are evaluated on $(S_{\cL,B},S_{\cD,B})$. A seed passes only if both fold directions satisfy $\Lambda(v)>0>U(v)$. Table~\ref{tab:llm-calibration}(b) reports fold-averaged, seed-level witness effects at $\rho_{\max}$, whereas panel~(a) uses the full natural or counterfactual probes at $\theta_0$.

\paragraph{Selection cross-fitting.}
On fold $A$, learnability, fixed-reference gradient cosine, or random selection chooses a candidate, whose utility is evaluated on $S_{\cD,B}$. The decision oracle selects on $S_{\cD,A}$ and is likewise evaluated on $S_{\cD,B}$. Table~\ref{tab:llm-selection} reports fold-averaged, seed-level utilities and learnability--oracle gaps. Spearman correlation uses full fixed surrogate and decision probes. Random, gradient-cosine, and oracle utilities are $\rho$-invariant because candidate updates and decision probes are fixed.

\paragraph{Seed aggregation.}
All LLM summaries report medians over five paired warm-start seeds and 95\% percentile intervals from 8,000 seed-level bootstrap resamples. These intervals describe warm-start variation conditional on the fixed probes.

\end{document}